\documentclass{article} 
\usepackage{iclr2026_conference,times}


\usepackage{amsmath,amsfonts,bm}









\def\eqref#1{equation~\ref{#1}}









\def\1{\bm{1}}










\DeclareMathAlphabet{\mathsfit}{\encodingdefault}{\sfdefault}{m}{sl}
\SetMathAlphabet{\mathsfit}{bold}{\encodingdefault}{\sfdefault}{bx}{n}













\usepackage{hyperref}
\usepackage{url}
\usepackage{algorithm}
\usepackage[noend]{algpseudocode}
\usepackage{color}
\usepackage{tabularx}
\usepackage{booktabs}
\usepackage{enumitem}
\usepackage{multirow}
\usepackage{array}
\usepackage{graphicx}
\usepackage{pifont}
\usepackage{makecell}

\usepackage{caption}

\usepackage{bbding}
\usepackage[utf8]{inputenc}
\usepackage{CJKutf8}
\usepackage{amsthm}
\newtheorem{theorem}{Theorem}

\newcolumntype{C}[1]{>{\centering\arraybackslash}p{#1}}
\newcolumntype{Y}{>{\centering\arraybackslash}X}

\definecolor{mygray}{gray}{.9}  

\title{UniPruning: Unifying Local Metric and Global Feedback for Scalable Sparse LLMs}

\author{Yizhuo Ding\textsuperscript{1,2}\textsuperscript{*}, Wanying Qu\textsuperscript{1,2}\textsuperscript{*}, Jiawei Geng\textsuperscript{1},Wenqi Shao\textsuperscript{2}, Yanwei Fu\textsuperscript{1}\\
 \textsuperscript{1}Fudan University,\textsuperscript{2}Shanghai AI Laboratory \\
 \textsuperscript{*}Equal contribution \\
 \texttt{\{yzding22,jwgeng25\}@m.fudan.edu.cn}; \texttt{\{rcqtt\}@126.com}\\
  \texttt{\{yanweifu\}@fudan.edu.cn}; \texttt{\{shaowenqi\}@pjlab.org.cn} \\
 }

\newcommand{\methodname}{UniPruning}
\usepackage{amssymb} 
\usepackage{amsthm}

\newtheorem{lemma}[theorem]{Lemma}

\newtheorem{definition}[theorem]{Definition}
\begin{document}

\maketitle

\begin{abstract}

Large Language Models (LLMs) achieve strong performance across diverse tasks but face prohibitive computational and memory costs. Pruning offers a promising path by inducing sparsity while preserving architectural flexibility. However, existing methods struggle to balance efficiency and robustness: local metric approaches prune layer by layer but often collapse under high sparsity, whereas global feedback methods enforce consistency at the cost of expensive weight updates or restrictive semi-structured formats. We present \textbf{UniPruning}, a unified post-training pruning framework that combines the speed of local saliency metrics with the stability of global coordination, enabled by a mirror descent based optimization, all \textbf{without updating model weights}. UniPruning leverages fast layer-wise scoring and a lightweight global controller to allocate a single sparsity budget, supporting both unstructured and semi-structured $N{:}M$ pruning within one framework. After a brief calibration, it can generate pruning masks for arbitrary sparsity levels in one shot, and adapts seamlessly to hardware-aware constraints. Extensive experiments on multiple pretrained LLM families and standard benchmarks show that UniPruning consistently delivers competitive or superior perplexity and zero-shot accuracy. Ablation studies further highlight the importance of mirror descent and local saliency anchoring. Overall, UniPruning provides an efficient, principled, and scalable solution for sparsifying large-scale LLMs. Our code is available at: https://github.com/RainbowQTT/UniPruning.

\end{abstract}

\section{Introduction}

Large Language Models (LLMs)~\citep{achiam2023gpt,touvron2023llama,zhang2022opt} have redefined the frontier of natural language processing, achieving unprecedented capabilities across diverse tasks. Yet, their deployment at scale remains constrained by prohibitive computational and memory costs driven by their enormous parameter counts. To bridge this gap, model compression has emerged as a critical direction, with quantization~\citep{lin2024awq}, distillation~\citep{gou2021knowledge}, and pruning~\citep{frantar2023sparsegpt} as key strategies. Among these, pruning stands out for its ability to induce sparsity while preserving architectural flexibility, thereby delivering substantial reductions in both memory footprint and computational demand.

Existing pruning paradigms for LLMs differ along two axes: \textbf{structural granularity and algorithmic coordination}. Structurally, pruning ranges from \textit{unstructured pruning}, which removes individual weights for fine-grained control but suffers from limited hardware acceleration, to \textit{structured pruning}, which eliminates entire channels or neurons to enable efficient execution on modern accelerators. \textit{Semi-structured pruning}~\citep{mishra2021accelerating}, such as the widely adopted $N{:}M$ format, strikes a practical balance, enabling substantial sparsity with hardware-friendly patterns.

From an algorithmic perspective, pruning methods fall into two categories: local metric and global feedback. Local approaches, such as SparseGPT~\citep{frantar2023sparsegpt} and Wanda~\citep{sun2024simple}, make layer-wise pruning decisions based on weight and activation statistics, offering simplicity but often failing under high sparsity due to ignored cross-layer dependencies. Global feedback methods address this by introducing model-wide coordination through regularization or mask learning, as seen in SparseLLM~\citep{bai2024sparsellm} and ProxSparse~\citep{liu2025proxsparse}. While more consistent, these approaches can be computationally costly or restricted by specific sparsity formats.

In this paper, we introduce \textbf{\methodname}, a unified pruning framework that combines the speed of local metrics with the consistency of global feedback, all without requiring weight updates. To integrate these two objectives, we adopt the mirror descent algorithm as a principled approach for joint optimization. \methodname\ employs a fast, layer-wise scoring step to extract local evidence, coupled with a lightweight global controller that redistributes a single sparsity budget across layers using a mirror-descent projection~\citep{beck2003mirror,nemirovsky1983problem}. Concretely, model weights evolve along a gradient flow while an auxiliary saliency variable $\Gamma$ is updated under a sparsity-aware projection. This mechanism naturally supports both unstructured and semi-structured ($N{:}M$) pruning within one framework. After calibration, masks are generated by a single sorting operation on $\Gamma$ and directly applied to the original pretrained weights, enabling one-shot extraction of multiple sparsity levels.
To stabilize pruning decisions, \methodname\ incorporates local saliency signals from a calibration set as robust local signals, while the global controller ensures balanced allocation across layers. This synergy yields pruning that is as efficient as local methods, yet globally consistent and structurally aware like feedback-based approaches. 

To evaluate the effectiveness of our method, we conduct extensive experiments across a diverse set of large language models. We benchmark \methodname\ under both unstructured and semi-structured sparsity regimes, comparing it against widely-used post-training pruning baselines. Our results show that \methodname\ consistently achieves competitive or superior performance in terms of perplexity and zero-shot accuracy, especially under high sparsity levels. Notably, it maintains model stability where other methods degrade, and achieves strong average results across models and tasks—all while avoiding any weight updates during pruning. We also perform detailed ablation studies to validate the role of mirror descent and the choice of local saliency metrics. 
Our contributions are:
\begin{itemize}[leftmargin=9pt]
    \item \textbf{A unified view of local metric and global feedback pruning.} 
    \methodname\ offers a framework that keeps the simplicity of local metric, layer-wise evidence  while coordinating a model-wide sparsity budget through a global regularizer and one-shot ranking. This unification maintains layer-level structure preservation and improves whole-model trade-offs under a common budget.
    
    \item \textbf{Mirror-descent pruning without weight update.} 
    We extend mirror descent to LLM pruning by learning a saliency variable $\Gamma$ jointly with weights and anchoring it to data-driven local saliency metrics. The same procedure supports both unstructured sparsity and semi-structured patterns. By avoiding weight updates and relying solely on learned saliency, the method remains lightweight, preserves accuracy, and is practical for scaling to large language models.

    \item \textbf{Extensive evaluation across models and sparsity.} 
    We test \methodname\ on multiple pretrained LLMs and standard benchmarks, comparing against previous state-of-the-art pruning baselines. The method consistently delivers strong accuracy at moderate-to-high sparsity while remaining efficient. Our results indicate that mirror-descent saliency, anchored to local metric, is a robust drop-in route for both unstructured and semi-structured ($N{:}M$) LLM sparsification.
\end{itemize}

\section{Related Work}

\noindent \textbf{LLM pruning.}
The tension between local efficiency and global coordination becomes even more pronounced when scaling pruning to large language models. On the one hand, lightweight local approaches such as Wanda~\citep{sun2024simple}, RIA~\citep{zhang2024plug}, stochRIA~\citep{yi2025symmetric} show that activation-aware (relative activation-aware) scoring can prune both unstructured and semi-structured ($N{:}M$) patterns effectively, requiring only a small calibration set and minimal computation. On the other hand, methods like SparseGPT~\citep{frantar2023sparsegpt} push pruning into the LLM regime by incorporating approximate second-order information: they prune many weights in one shot and apply a local least-squares correction to stabilize outputs.
These advances confirm that post-training pruning can achieve strong efficiency–accuracy trade-offs at LLM scale, even without retraining. However, they remain fundamentally layer-local: each layer is pruned largely in isolation, with limited capacity to balance sparsity across the entire model~\citep{ding2025revisiting}. This gap highlights the central open question for LLM pruning: how to unify the speed and practicality of local methods with the robustness and balance of global coordination, especially under extreme sparsity or hardware-specific formats.

\noindent \textbf{Mirror descent.}
Mirror descent~\citep{nemirovsky1983problem, beck2003mirror, bubeck2015convex, ding2025adaptive} is a general framework for constrained, geometry-aware optimization. It maps parameters into a dual space, takes gradient steps there, and projects back via a mirror map (Bregman projection). With a Euclidean map it reduces to projected gradient descent; with nonsmooth regularizers it yields proximal updates. This is particularly useful for pruning, where sparsity can be expressed as a regularizer (e.g., $\ell_1$, group, or block constraints) and enforced through a proximal step—stabilizing optimization while respecting structural constraints. Mirror descent also connects to continuous-time dynamics in differential-inclusion-based pruning, viewing proximal updates as discrete flows toward sparse sets. We build on this toolkit to unify local saliency with global sparsity budgets in an efficient, structure-aware pruning framework.

\section{Problem Setup}
\label{sec:problem_setup}
\noindent \textbf{Overview of Post-Training Pruning}. 
Let $W_0$ be the pretrained parameters of an LLM and let $L(W)$ denote the task loss. Pruning seeks a sparse variable $\tilde W = M \odot W_0$ that preserves accuracy while reducing compute and memory, where $M\in\{0,1\}^d$ is a mask with target sparsity $s\in[0,1]$. We consider:
\begin{itemize}
  \item Unstructured: elementwise masking with a global or per-layer budget.
  \item Semi-structured (N:M): in each contiguous M elements, at most N elements are kept.
\end{itemize}

A calibration set $\mathcal{C}$ is typically used to guide the choice of $M$, ensuring that the pruned model behaves similarly to the dense model at the desired sparsity and structure. Such calibration sets are often drawn from common pretraining corpora such as C4~\citep{raffel2020exploring}, WikiText~\citep{merity2016pointer}, or PTB~\citep{marcinkiewicz1994building}.

\begin{figure}
    \centering
    \includegraphics[width=1.0\textwidth]{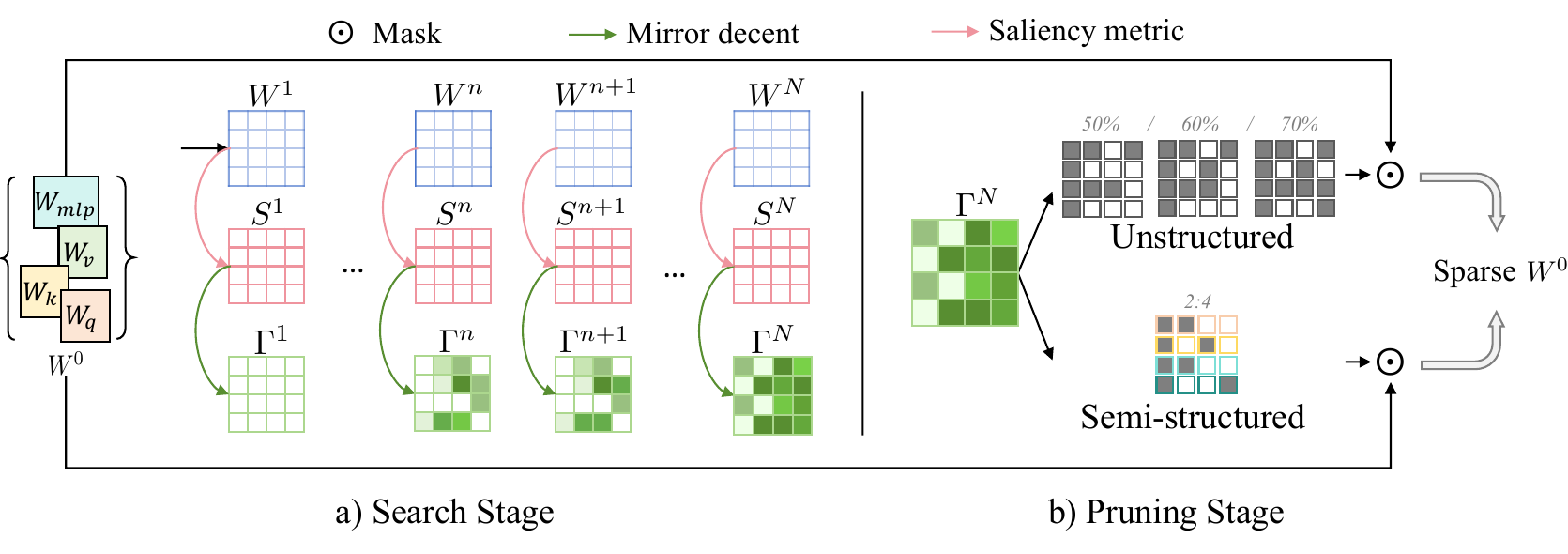}
    \captionsetup{skip=5pt}
   \caption{Overall framework of \textbf{Unified Pruning}. 
The framework targets pruning in two types of layers: \textbf{MLP layers} and \textbf{attention projection layers}. 
It operates in two stages (a) \textbf{Search Stage:} model weights $W$ are iteratively updated while saliency variables $\Gamma$ are jointly optimized with local metrics $S(W)$ via mirror descent, gradually accumulating pruning signals. 
(b) \textbf{Pruning Stage:} the final $\Gamma^N$ is projected into unstructured or semi-structured sparsity masks, which are applied to the original pretrained weights $W^0$ to yield sparse models at arbitrary sparsity levels.}
\vspace{-0.2in}
    \label{fig:framework}
\end{figure}

\noindent \textbf{Global feedback pruning.}  
Global feedback pruning directs sparsity decisions using the model’s overall objective rather than isolated layer-wise signals. This avoids premature pruning, captures cross-layer dependencies, and produces sparser structures that better preserve performance.
It uses a single, coordinated budget aligned with the model’s global objective, guiding pruning toward a near-optimal solution rather than suboptimal local choices. However, it has limitations: mask-based formulations are often restricted to specific patterns (e.g., fixed ($N{:}M$)); optimization can be complex (involving auxiliary variables, alternating solvers, or regularized mask learning); and each run typically targets a single sparsity level, requiring separate executions for different budgets.

Let $\mathcal{D}$ measure the discrepancy between the pruned and dense models on $\mathcal{C}$, and let $Cost(M)$ be a global cost (e.g., nonzeros, FLOPs). A budgeted global objective is
\begin{equation}
\label{eq:global-budget}
\min_{M\in\mathcal{M}} \ \frac{1}{|\mathcal{C}|}\sum_{x\in\mathcal{C}} \mathcal{D}\!\big(f(x; W_0\odot M),\ f(x; W_0)\big)
\quad \text{s.t.}\quad Cost(M)\le B,
\end{equation}
or, in Lagrangian form,
\begin{equation}
\label{eq:global-lagrange}
\min_{M\in\mathcal{M}} \ \frac{1}{|\mathcal{C}|}\sum_{x\in\mathcal{C}} \mathcal{D}\!\big(f(x; W_0\odot M),\ f(x; W_0)\big) \;+\; \lambda\,Cost(M).
\end{equation}
Here, $\mathcal{M}$ encodes the structure (unstructured or $N\!:\!M$). This captures the “one budget for the whole model” view emphasized by global approaches.

\noindent \textbf{Local metric pruning.}  
Local metric pruning relies on simple heuristics, such as weight magnitude or weight activation products, to prune parameters independently within each layer. This makes it efficient and easy to apply without heavy global optimization or modifications to pretrained weights. It naturally supports multiple sparsity patterns (unstructured and semi-structured) and can operate in a single pass with only a small calibration set.
However, this layer-wise independence comes at a cost: it ignores cross-layer dependencies and trade-offs, which can lead to suboptimal sparse structures and degraded model-level accuracy, particularly at high sparsity. Moreover, many criteria remain heuristic and lack a unified optimization framework, limiting principled control over sparsity allocation.  So, its each layer $\ell$ selects a mask under its own budget $B_\ell$, where $g_\ell$ represents the output of layer $\ell$ :
\begin{equation}
\label{eq:local}
\min_{M_\ell\in\mathcal{M}_\ell} \ \frac{1}{|\mathcal{C}|}\sum_{x\in\mathcal{C}} 
\mathcal{D}_\ell\!\big(g_\ell(x; W_{0,\ell}\odot M_\ell),\ g_\ell(x; W_{0,\ell})\big)
\quad \text{s.t.}\quad Cost_\ell(M_\ell)\le B_\ell.
\end{equation}

\noindent \textbf{\methodname.}
Our method offers the best of both worlds by unifying global feedback and local metric pruning within a mirror descent framework. It incorporates a lightweight, model-level controller that dynamically allocates pruning budgets across layers and enforces target sparsity patterns through a structured projection step. This design preserves the efficiency and flexibility of local pruning, leverages the coordination of global feedback, and remains easily tunable across diverse sparsity levels.

\section{\methodname}
\label{sec:method}

\begin{algorithm}[t]
\caption{\methodname: Mirror-Descent Pruning with Local Metric and Global Feedback}
\label{alg:main}
\begin{algorithmic}[1]
\Require Pretrained weights $W_0$; calibration set $\mathcal{C}$; parameters $\rho > 0,\ \kappa > 0$; total steps $N$; step sizes $\{\alpha_n\}_{n=0}^{N-1}$.
\Ensure Pruned weights $\widetilde{W}(B) = W_0 \odot \widehat{M}(B)$ for any global budget $B$.
\vspace{0.5em}
\State \textbf{Local Saliency Statistics:} For each layer, run $\mathcal{C}$ once to collect inputs $X$. Compute local metric $S(W, X)$.
\State \textbf{Initialize:} $W^{0} \gets W_{0}$,\; $\Gamma^{0} \gets 0$,\; $V^{0} \gets 0$.
\vspace{0.3em}
\For{$n = 0$ \textbf{to} $N-1$}
  \State $S(W^{n}, X) \gets \text{Local metric at current } W^{n}$ \Comment{Recompute local statistics every iteration}
  \State $g_{\text{task}} \gets \nabla_W \mathcal{L}_{\text{task}}(W^{n})$
  \State $g_{\text{align}} \gets \rho \cdot \nabla_W \tfrac{1}{2} \|\Gamma^{n} - S(W^{n}, X)\|_F^{2}$
  \State $W^{n+1} \gets W^{n} - \kappa \alpha_n \cdot (g_{\text{task}} + g_{\text{align}})$ \Comment{Gradient step on $W$}
\If{$N\!:\!M$ pruning}
  \State $W^{n+1} \gets \operatorname{Prox}_{R_{2:4}}(W^{n+1})$
  \State $R_{2:4}(w) = |w_1||w_2||w_3| + |w_2||w_3||w_4| + |w_3||w_4||w_1| + |w_4||w_1||w_2|$
\EndIf
  \State $V^{n+1} \gets V^{n} - \alpha_n \rho \cdot (\Gamma^{n} - S(W^{n}, X))$
  \State $\Gamma^{n+1} \gets \operatorname{Prox}_{\Omega}(V^{n+1})$ \Comment{Proximal update on $\Gamma$}
\EndFor
\vspace{0.3em}
\State $\Gamma^\star \gets \Gamma^N$ \Comment{Final saliency scores}
\vspace{0.5em}
\Statex \textbf{Export (unstructured):} Sort $|\Gamma^\star|$ once. For any global budget $B$, set threshold $\tau(B)$ to keep the top-$B$ entries and define mask $\widehat{M}(B) = \mathbb{I}(|\Gamma^\star| \ge \tau(B))$. Return $\widetilde{W}(B) = W_0 \odot \widehat{M}(B)$.
\Statex \textbf{Export ($N\!:\!M$):} In each block of size $M$, keep the top-$N$ entries by $|\Gamma^\star|$ and zero out the rest. Return $\widetilde{W}_{N\!:\!M} = W_0 \odot \widehat{M}_{N\!:\!M}$.
\end{algorithmic}
\vspace{0.in}
\end{algorithm}

We propose a mirror-descent pruning method that learns a saliency variable $\Gamma$ together with a trainable copy of the weights $W$ (initialized from $W_0$). After training, we sort the final $\Gamma$ once to derive masks at any desired sparsity and apply those masks to $W_0$. In this way, $\Gamma$ acts as a data-driven pruning score. Because mask extraction is decoupled from training, the pretrained weights remain intact, which helps preserve performance without weight update.
\subsection{algorithm}
\noindent \textbf{Local metric regularization}.
We introduce a local metric regularizer that links the saliency score $\Gamma$ to the current weights $W$ by using a local importance metric $S(W, X)$, where $X$ denotes the input statistics collected from a small calibration set $\mathcal{C}$.  This map encodes the local significance of each weight based on its interaction with the input activations, assigning higher scores to connections associated with stronger or more frequently activated inputs.

This score increases the importance of weights connected to strongly activated inputs, while reducing the influence of weights tied to weak or rarely used inputs. The absolute value ensures the score reflects the strength of the connection regardless of its sign, focusing only on the magnitude of the weight and the scale of the input activation.

This design follows the local metric approach used in Wanda~\citep{sun2024simple}, and can also incorporate other local methods such as RIA~\citep{zhang2024plug}; additional experimental comparisons are provided in the appendix.

To align the learned saliency $\Gamma$ with this local signal, we apply a simple alignment loss:
\[
\big\|\,\Gamma - S(W,X)\,\big\|_F^2 .
\]
This encourages $\Gamma$ to reflect meaningful, data-driven local importance metric without constraining the weights themselves, which remain free to be updated by the task loss.

\noindent \textbf{Objective}.
We now describe the training objective used throughout the pruning stage. Let the task loss on $\mathcal{C}$ be
\[
\mathcal{L}_{\mathrm{task}}(W)=\frac{1}{|\mathcal{C}|}\sum_{x\in\mathcal{C}}\ell\!\big(f(x;W)\big).
\]
We consider the composite energy
\begin{equation}
\label{eq:energy}
\bar{\mathcal{L}}_\rho(W,\Gamma)
=\mathcal{L}_{\mathrm{task}}(W)
+\tfrac{\rho}{2}\,\|\,\Gamma - S(W)\,\|_F^2
+\Omega(\Gamma),
\end{equation}
where $\Omega$ is a sparsity-inducing term. The second term injects local metric by aligning $\Gamma$ with $S(W,X)$, $\rho$ as a hyperparameter; the third term imposes a global sparsity objective via $\Omega$, guiding the gradual gradient updates to $\Gamma$ and $W$ with global feedback. Because $\Omega$ can be non-differentiable such as $L_1$, we do not minimize \eqref{eq:energy} directly. Instead, we use a mirror-descent splitting that leads to the following dynamics.

\noindent \textbf{Dynamics and updates}.
We now present the detailed algorithmic formulation of our proposed pruning method, \methodname.
During the sparsity training stage, we update the model weights $W$ and the saliency scores $\Gamma$ through a coupled dynamic process, with $\mathbf{V}$ acts as a conjugate variable (or dual variable). The weights $W$ follow a smooth gradient descent, while $\Gamma$ is updated via a proximal step guided by both local activation statistics and a global sparsity constraint.
\begin{align}
W^{k+1}&=W^{k}-\kappa\,\alpha_k\Big(\nabla_W \mathcal{L}_{\mathrm{task}}(W^{k})+\rho\,\nabla_W \tfrac12\|\Gamma^{k}-S(W^{k})\|_F^2\Big), \label{eq:update-w}\\
V^{k+1}&=V^{k}-\alpha_k\,\rho\big(\Gamma^{k}-S(W^{k})\big), \label{eq:update-v}\\
\Gamma^{k+1}&=\operatorname{Prox}_{\Omega}\!\big(V^{k+1}\big). \label{eq:update-gamma}
\end{align}
Here the proximal operator of $\Omega$ is
\[
\operatorname{Prox}_{\Omega}(Z)=\arg\min_{U}\ \frac{1}{2}\|U-Z\|_F^2+\Omega(U).
\]

where $\rho$ and $\kappa$ are tunable hyperparameters, and $S(W)$ denotes a layer-wise saliency metric computed from activations. The procedure is initialized with $W^0 = W_0$, $\Gamma^0 = 0$, and $V^0 = 0$, and then proceeds iteratively as summarized in Algorithm~\ref{alg:main}.

The update on $\Gamma$ in \eqref{eq:update-gamma} yields a saliency map, not a trajectory we retain. During training, the alignment term pulls $\Gamma_t$ toward the activation-aware scores $S(W_t)$, while the proximal map of $\Omega$ enforces sparsity and structure. As optimization proceeds, entries that matter for the task loss grow in magnitude in $\Gamma_t$, and unimportant ones are pushed toward zero. After the dynamics stabilizes, we treat the final map $\Gamma^\star$ as a data-driven ranking of connections. Sorting and thresholding $|\Gamma^\star|$ once produces masks at arbitrary sparsity levels, eliminating the need to retrain for each target.

After training converges, we discard all intermediate iterates and retain only the final $\Gamma^\star$. This mapping is then used to derive pruning masks at arbitrary sparsity levels without further retraining. Specifically, we sort $|\Gamma^\star|$ globally and select a threshold $\tau(B)$ that preserves the top-$B$ entries. The resulting mask and pruned weights are given by: 
\[
\widehat{M}(B)_{ij} = \mathbb{I}\!\left( |\Gamma^\star_{ij}| \ge \tau(B) \right), \quad \widetilde{W}(B) = W_0 \odot \widehat{M}(B).
\]
Thus, a single training run suffices to generate pruning masks for any sparsity level, providing both flexibility and efficiency while avoiding repeated retraining.

\subsection{Discussion}
\label{sec:discuss}
Mirror descent stabilizes pruning by unifying local and global signals for robust high-sparsity performance. We give more insights here.

 \textit{(1) Why mirror descent is necessary.}  
Directly combining local metric and global feedback methods can introduce bias into the optimization process. To mitigate this, we decouple the model parameters from the sparsity objective, which stabilizes training and improves accuracy. This decoupling requires the introduction of an additional variable, $\Gamma$, to balance the trade-off between enforcing sparsity and preserving the convergence direction of the model. Ablation study of mirror descent is conducted in Section~\ref{sec:ablation}

\textit{(2) Advantages of our method.}
Our approach benefits from gradual, saliency-guided sparsification. By maintaining alignment with local metrics and leveraging the decoupled optimization framework, the model remains robust across architectures and pruning levels. As shown in our experiments~\ref{sec:exp}, it maintains strong performance even at high sparsity ratios such as 60\% and 70\%, outperforming prior methods that often collapse under such conditions.

\subsection{Convergence}\label{sec:convergence}
Building on the mirror descent framework, we introduce a saliency variable $\Gamma$ as part of a splitting strategy to decouple weight optimization from sparsity enforcement. While this approach is empirically effective, it may influence the convergence behavior of mirror descent—particularly because the regularizer $\Omega$ is not necessarily differentiable. To address this, we provide a rigorous convergence analysis of our algorithm.

We study the convergence of the composite objective defined in Eq.~\ref{eq:energy}, under the following assumptions: (i) $\mathcal{L}_{\text{task}}$ has Lipschitz continuous gradients and is bounded below, 
(ii) $S(W)$ is smooth with Lipschitz Jacobian, and 
(iii) $\Omega$ is proper, convex, and lower semi-continuous. 

\begin{theorem}[Global convergence\label{thm:global}]
Under the above assumptions, if the step size $\alpha$ satisfies
\[
0 < \alpha < \tfrac{2}{\kappa (L_W + \rho L_S^2)},
\]
then the sequence $\{(W^k, \Gamma^k)\}$ generated by updates Eq.~\ref{eq:update-gamma} converges to a critical point of Eq.~\ref{eq:energy}.
\end{theorem}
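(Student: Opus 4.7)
The plan is to establish the theorem within the nonconvex proximal-splitting framework of Attouch--Bolte--Svaiter, which requires three ingredients along the iterates: a sufficient-decrease inequality, a relative-error bound on subgradients, and the Kurdyka--Lojasiewicz (KL) property of the objective. The key observation is that the $W$-update \eqref{eq:update-w} is a standard gradient step on the smooth part $F(W,\Gamma) := \mathcal{L}_{\mathrm{task}}(W) + \tfrac{\rho}{2}\|\Gamma - S(W)\|_F^2$, whereas the pair \eqref{eq:update-v}--\eqref{eq:update-gamma} is \emph{not} a direct proximal-gradient step on $\Gamma^k$ but a proximal map applied to an accumulating dual variable $V^{k+1}$. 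I would therefore not work with $\bar{\mathcal{L}}_\rho$ alone, but with an augmented Lyapunov functional of the form $\Phi(W,\Gamma,V) = \bar{\mathcal{L}}_\rho(W,\Gamma) + \tfrac{c}{2}\|V - \Gamma\|_F^2$ that couples the primal saliency with its dual accumulator.

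First, under assumptions (i)--(ii) I would verify that $F$ has block-Lipschitz gradient in $W$ with effective constant at most $L_W + \rho L_S^2$ on bounded sublevel sets; this is precisely where the step-size restriction $\alpha < 2/(\kappa(L_W+\rho L_S^2))$ enters, and the standard descent lemma yields $F(W^{k+1},\Gamma^k) \le F(W^k,\Gamma^k) - c_1\|W^{k+1}-W^k\|_F^2$. Second, I would use the prox optimality $V^{k+1}-\Gamma^{k+1}\in\partial\Omega(\Gamma^{k+1})$ together with \eqref{eq:update-v} to propagate this into a joint sufficient-decrease inequality for $\Phi$,
\[
\Phi(W^{k+1},\Gamma^{k+1},V^{k+1}) \le \Phi(W^k,\Gamma^k,V^k) - c_2\bigl(\|W^{k+1}-W^k\|_F^2 + \|\Gamma^{k+1}-\Gamma^k\|_F^2\bigr),
\]
from which telescoping, together with lower-boundedness of $\mathcal{L}_{\mathrm{task}}$ and $\Omega$, gives $\sum_k (\|W^{k+1}-W^k\|_F^2 + \|\Gamma^{k+1}-\Gamma^k\|_F^2) < \infty$ and boundedness of the iterates on compact sublevel sets.

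Third, I would establish the relative-error bound: a limiting subgradient $\xi^{k+1}\in\partial\bar{\mathcal{L}}_\rho(W^{k+1},\Gamma^{k+1})$ can be constructed from $\nabla_W F(W^{k+1},\Gamma^{k+1})$ on the $W$-block and from $V^{k+1}-\Gamma^{k+1}$ (which lies in $\partial\Omega(\Gamma^{k+1})$) on the $\Gamma$-block, and bounded in norm by $O(\|W^{k+1}-W^k\|_F+\|\Gamma^{k+1}-\Gamma^k\|_F)$ using Lipschitzness of $\nabla F$ and the $V$-update. Finally, invoking the KL property of $\bar{\mathcal{L}}_\rho$---which holds for the typical convex $\Omega$ of interest such as $\ell_1$ and group/block norms, all of which are semi-algebraic---I would conclude by the Attouch--Bolte--Svaiter theorem that the sequence $\{(W^k,\Gamma^k)\}$ has finite length and therefore converges to a critical point of $\bar{\mathcal{L}}_\rho$.

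The main obstacle is the non-standard $\Gamma$-update: because $V$ accumulates rather than being reset to $\Gamma^k$ at each iteration, the $\Gamma$-subproblem is not a textbook proximal-gradient step and does not automatically descend on $\bar{\mathcal{L}}_\rho$. Identifying the correct coupling coefficient $c$ in $\Phi$ and verifying the compatible step-size window so that the joint $(W,\Gamma,V)$ dynamic admits both the sufficient-decrease and relative-error inequalities is the delicate technical part. A secondary subtlety is that the effective Lipschitz constant of $\nabla_W F$ depends on $\|\Gamma - S(W)\|_F$, so strict monotonicity of $\Phi$ must be used to confine the iterates to a compact set from the outset, closing the loop on the constants used in the step-size bound.
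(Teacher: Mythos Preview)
Your overall strategy---sufficient decrease, relative-error bound, then KL via the Attouch--Bolte--Svaiter framework---matches the paper's route. The difference is the Lyapunov functional. The paper does \emph{not} augment $\bar{\mathcal{L}}_\rho$ with a quadratic $\tfrac{c}{2}\|V-\Gamma\|_F^2$; instead it tracks the triple $Q_k=(W_k,\Gamma_k,g_{k-1})$, where $g_k\in\partial\Omega(\Gamma_k)$ is the subgradient produced by the prox step, and uses the Bregman-augmented functional $F(Q_k)=\alpha\,\bar{\mathcal{L}}_\rho(W_k,\Gamma_k)+B_\Omega^{\,g_{k-1}}(\Gamma_k,\Gamma_{k-1})$. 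This is the canonical Lyapunov for linearized-Bregman/mirror-descent iterations: the three-point identity for Bregman divergences absorbs the accumulating $V$-dynamics and yields sufficient decrease directly, and the $g$-slot of $\partial F(Q_{k+1})$ is simply $\Gamma_k-\Gamma_{k+1}$, which makes the relative-error lemma immediate---no coupling constant $c$ to tune.

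Your quadratic term $\|V-\Gamma\|_F^2$ equals $\|g\|_F^2$, the squared norm of the dual variable; for $\Omega=\ell_1$ this is essentially the support size of $\Gamma$, which can \emph{increase} along the iterates, so monotone decrease of your $\Phi$ is genuinely in doubt---this is precisely the ``delicate technical part'' you leave unresolved, and it is not merely a matter of finding the right $c$. Replacing the quadratic by the Bregman divergence of $\Omega$ removes the obstacle and recovers the paper's argument.
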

The proof of Theorem~\ref{thm:global} is provided in Appendix~\ref{sc:proof}.

\paragraph{Remark.} The convergence to a stationary point justifies extracting pruning masks directly from the limit $\Gamma^\star$ via global thresholding, enabling one-shot mask generation without retraining.
\section{experiments}
\label{sec:exp}

\noindent \textbf{Experimental Setup}.
We evaluate on representative LLM families, including  \textbf{LLaMA2}~\citep{touvron2023llama}, 
\textbf{Qwen2.5}~\citep{qwen2.5}, and \textbf{Llama3}~\citep{grattafiori2024llama3} series, 
as well as  distilled \textbf{DeepSeek} model~\citep{guo2025deepseek}. 
 We consider several representative post-training pruning competitors which involve no weight update during pruning, keeping the same with our method: 
(1) Magnitude~\citep{zhu2017prune}, the most prevalent pruning approach; 
(2) Wanda~\citep{sun2024simple}, which ranks weights using local metric scores and is applicable to both unstructured and semi-structured settings; 
(3) RIA~\citep{zhang2024plug}, which combines relative importance with activation norms to provide stable pruning decisions across different sparsity levels; 
and (4) ProxSparse~\citep{liu2025proxsparse}, a proximal optimization framework that specifically targets semi-structured $N{:}M$ pruning and achieves state-of-the-art results under  2:4 pattern.

For fairness, we adopt  common calibration setup of 128 randomly sampled C4 datasets~\citep{raffel2020exploring}. Model quality is evaluated on both zero-shot reasoning tasks and language modeling: zero-shot performance is measured with the EleutherAI LM-Eval-Harnesss~\citep{eval-harness} on standard benchmark, while WikiText perplexity is reported as the language modeling metric~\citep{merity2016pointer}. For unstructured pruning we use stochRIA~\citep{yi2025symmetric} as the local metric, and for 2:4 semi-structured pruning we use Wanda~\citep{sun2024simple}. In both settings, we apply an additional $\lambda L_1$ regularization term $\Omega$ with $\lambda=0.001$ (further discussed in Appendix~\ref{sec:hyper}). Ablations of the local metric are provided in Section~\ref{sec:ablation}. For all LLMs, we fix the context length to 4096. All experiments are conducted on a single NVIDIA H200 GPU with 141GB of memory, using a learning rate of 1e-4.

\begin{table}
\centering
\scriptsize
\renewcommand{\arraystretch}{1.1}
\caption{WikiText perplexity and zero-shot downstream benchmark results at 60\% sparsity. \label{tab:0.6_bench} }
\vspace{-0.1in}
\begin{tabular}{l l c | c c c c c c | c}
\toprule
Model & Method & WikiText PPL & ARC-C & ARC-E & HellaSwag & OBQA & PIQA & SIQA & Avg \\
\midrule

\multirow{5}{*}{LLaMA2-13B}
& Dense     & 4.57 & 0.4846 & 0.7942 & 0.6003 & 0.3500 & 0.7900 & 0.4729 & 0.5820 \\
& Magnitude & 11.22 & 0.2713 & 0.5623 & 0.4465 & 0.2180 & 0.6872 & 0.3941 & 0.4299 \\
& Wanda     & 11.90 & 0.3123 & 0.6460 & 0.4483 & 0.2740 & 0.7182 & 0.4243 & 0.4709 \\
& RIA       & \textbf{7.57} & 0.3652 & 0.6970 & 0.5027 & \textbf{0.2840} & 0.7437 & 0.4524 & 0.5075 \\
& \methodname\  & 7.82 & \textbf{0.3695} & \textbf{0.7003} & \textbf{0.5096} & \textbf{0.2840} & \textbf{0.7470} & \textbf{0.4529} & \textbf{0.5106} \\
\midrule

\multirow{5}{*}{Qwen2.5-7B}
& Dense     & 6.39 & 0.4829 & 0.8047 & 0.6002 & 0.3360 & 0.7867 & 0.5481 & 0.5931 \\
& Magnitude & 3835.29 & 0.2295 & 0.3447 & 0.2594 & 0.1720 & 0.5305 & 0.3460 & 0.3137\\
& Wanda     & 14.06 & 0.3848 & 0.7163 & 0.4688 & \textbf{0.2680} & 0.7263 & \textbf{0.4780} & 0.5070\\
& RIA       & 12.09 & 0.3857 & \textbf{0.7344} & 0.4655 & 0.2600 & 0.7301 & 0.4621 & 0.5063 \\
& \methodname\  & \textbf{11.87} & \textbf{0.3959} & 0.7306 & \textbf{0.4736} & 0.2620 & \textbf{0.7345} & 0.4703 & \textbf{0.5112} \\
\midrule

\multirow{5}{*}{Qwen2.5-14B}
& Dense     & 4.93 & 0.5597      & 0.8241      & 0.6336      & 0.3480      & 0.8118      & 0.5537      & 0.6218 \\
& Magnitude & 117.74 & 0.3072 & 0.5455 & 0.4127 & 0.2900 & 0.6638 & 0.3828 & 0.4337 \\
& Wanda     & 11.68 & 0.4266 & 0.7492 & \textbf{0.5070} & 0.3020 & 0.7595 & \textbf{0.4765} & 0.5368 \\
& RIA       & 9.37 & 0.4360 & 0.7563 & 0.4991 & 0.2960 & \textbf{0.7601} & 0.4754 & 0.5378 \\
& \methodname\  & \textbf{8.85} & \textbf{0.4531} & \textbf{0.7605} & \textbf{0.5070} & \textbf{0.3040} & \textbf{0.7601} & 0.4698 & \textbf{0.5424} \\
\midrule

\multirow{5}{*}{Llama-3.2-1B}
& Dense     & 9.06 & 0.3157      & 0.6536      & 0.4774      & 0.2660      & 0.7459& 0.4284      &0.4812 \\
& Magnitude & 28096.59 & \textbf{0.1928} & 0.2643 & 0.2573 & 0.1380 & 0.5430 & 0.3373 & 0.2888 \\
& Wanda     & 261.88 & 0.1852 & 0.2959 & 0.2701 & 0.1280 & 0.5555 & 0.3367 & 0.2952 \\
& RIA       & 83.45 & \textbf{0.1928} & 0.3880 & 0.2884 & 0.1320 & 0.5941 & 0.3593 & 0.3258 \\
& \methodname\  & \textbf{45.32} & 0.1886 & \textbf{0.4226} & \textbf{0.3104} & \textbf{0.1440} & \textbf{0.6153} & \textbf{0.3654} & \textbf{0.3411} \\
\midrule

\multirow{5}{*}{Llama-3.2-3B}
& Dense     & 7.29 & 0.4224      & 0.7437      & 0.5532      & 0.3100      & 0.7666      & 0.4719      &  0.5446\\
& Magnitude & 21913.05 & 0.2073 & 0.2639 & 0.2660 & 0.1380 & 0.5408 & 0.3296 & 0.2910 \\
& Wanda     & 66.00 & 0.2193 & 0.4108 & 0.3132 & 0.1460 & 0.6121 & 0.3552 & 0.3428 \\
& RIA       & 29.08 & 0.2483 & 0.5118 & 0.3555 & 0.1700 & 0.6659 & 0.3889 & 0.3901 \\
& \methodname\  & \textbf{24.38} & \textbf{0.2654} & \textbf{0.5446} & \textbf{0.3714} & \textbf{0.1720} & \textbf{0.6774} & \textbf{0.4012} & \textbf{0.4053} \\
\midrule

\multirow{5}{*}{\shortstack{DeepSeek-R1-\\Distill-Llama8B}}
& Dense     & 11.86 & 0.4087      & 0.7037      & 0.5554      & 0.3140      &0.7606     & 0.4473      & 0.5316 \\
& Magnitude & 8154.51 & 0.2005 & 0.3066 & 0.2736 & 0.1520 & 0.5560 & 0.3444 & 0.3055 \\
& Wanda     & 50.66 & 0.2423 & 0.4676 & 0.3731 & 0.2000 & 0.6338 & 0.3864 & 0.3839 \\
& RIA       & 28.16 & 0.2901 & 0.5358 & 0.4041 & 0.1960 & 0.6649 & 0.3987 & 0.4149 \\
& \methodname\  & \textbf{24.50} & \textbf{0.3046} & \textbf{0.5804} & \textbf{0.4219} & \textbf{0.2140} & \textbf{0.6676} & \textbf{0.4186} & \textbf{0.4345} \\
\bottomrule
\end{tabular}
\vspace{-0.15in}
\end{table}

\subsection{Unstructured Pruning}

We evaluate unstructured pruning at 60\% sparsity on six pretrained LLMs, comparing \methodname\ against Magnitude, Wanda, and RIA under the standard 128-sample C4 calibration. We report WikiText perplexity and zero-shot accuracy on ARC-C/E, HellaSwag, OBQA, PIQA, and SIQA~\citep{eval-harness}, and further include the average accuracy over all the evaluated benchmarks. Detailed results shows in Table~\ref{tab:0.6_bench}.

Across six models, \methodname\ attains the best average accuracy on every model, with only three per-task scores falling short of the top by small margins.
For perplexity, \methodname\ leads on 5 models, with the lone non-best case (LLaMA2-13B) trailing by just 0.25 (7.82 vs.\ 7.57). 

In addition to average gains, Unified Pruning demonstrates stable improvements on commonsense-style tasks compared with other pruning baselines at the same sparsity level. For example, on Llama-3.2-3B, Unified Pruning improves ARC-C from 0.2193 (Wanda) and 0.2483 (RIA) to 0.2654, and SIQA from 0.3552 (Wanda) and 0.3512 (RIA) to 0.4012. On DeepSeek-R1-Distill-Llama8, it raises ARC-E from 0.4676 (Wanda) and 0.5358 (RIA) to 0.5804, and HellaSwag from 0.3731 (Wanda) and 0.4041 (RIA) to 0.4219. These results indicate that under high sparsity, \textbf{a globally coordinated budget allocation better preserves reasoning capacity}.

At 60\% unstructured sparsity, \methodname\ (i) sets \textbf{the best average zero-shot accuracy} on all reported bases with leading PPL and no collapse; and (ii) sustains \textbf{task-wise robustness} on commonsense benchmarks. These results support coupling local metric with a global budget to achieve balanced whole-model pruning.

\subsection{Semi-Structured $N\!:\!M$ Pruning}
\label{sec:nm_pruning}

For semi-structured pruning, we primarily evaluate the 2:4 sparsity pattern across models. To align with this hardware-friendly pattern, we incorporate a 2:4 regularizer ~\citep{kübler2025proximaloperatorinducing24sparsity} into our algorithm. The detailed formulation and implementation of this adaptation are provided in Algorithm~\ref{alg:main}. 

As shown in Table~\ref{tab:24_ppl_results}, Unified Pruning achieves the best perplexity across all evaluated models, consistently surpassing magnitude-based and importance-based baselines (e.g., Wanda, RIA). These results highlight the benefit of combining global coordination with local saliency in the semi-structured setting. Moreover, Unified Pruning also surpasses the current state-of-the-art ProxSparse, demonstrating that global coordination further enhances performance even under semi-structured constraints.

Beyond perplexity, we also evaluate zero-shot performance on downstream benchmarks under the 2:4 constraint, with results provided in Appendix~\ref{additional24result}. Furthermore, to situate our approach against stronger baselines, we conduct an additional comparison with SparseGPT~\citep{frantar2023sparsegpt}, which allows weight updates during pruning. The corresponding results are also reported in Appendix~\ref{additional24result}.

\begin{table}
\centering
\footnotesize
\renewcommand{\arraystretch}{1.1}
\captionsetup{skip=2pt}
\caption{WikiText perplexity across models and pruning methods under 2:4 semi-structured pruning. \label{tab:24_ppl_results} }
\vspace{-0.1in}
\resizebox{\textwidth}{!}{ 
\begin{tabular}{lcccccc}
\toprule
Method & LLaMA2-13B & Qwen2.5-7B & Qwen2.5-14B & LLaMA-3.2-3B & \makecell{DeepSeek-R1-\\Distill-Llama8B} & \makecell{DeepSeek-R1-\\Distill-Qwen-7B} \\
\midrule
Dense      & 4.57 & 6.39 & 4.93 & 7.29 & 11.86 &21.73 \\
Magnitude  & 8.32 & inf   & 48.59   & 668.75 &  459.82 & 270.25   \\
Wanda      & 8.37 & 14.77 & 11.69 & 32.86  & 29.77 & 49.83   \\
RIA        & 7.85 & 13.81 & 10.87 & 33.38  & 30.10 & 43.11   \\
ProxSparse & 6.88 & 14.06   & 10.54   & 22.44 & 23.74 & 35.66   \\
\methodname\   & \textbf{6.87} & \textbf{10.86} & \textbf{9.10} & \textbf{21.20} & \textbf{20.91}  & \textbf{30.24}   \\
\bottomrule
\end{tabular}
}
\vspace{-0.05in}
\end{table}

\subsection{Results On Ascend NPU}
We additionally evaluate openPangu architecture on Huawei Ascend NPU, as reported in Table~\ref{tab:ascend}. This model adopts a substantially different architecture and training paradigm from other models, making it a compelling cross-architecture validation case. Under the unstructured 50\% sparsity and 2:4 sparsity setting, \methodname all significantly outperforms baselines on openPangu-7B and interestingly finds the winning structure of the lottery ticket. This demonstrates that UniPruning generalizes robustly to different model architectures and also heterogeneous hardware backends (GPU/NPU), providing concrete empirical evidence for its architectural agnosticism.

\begin{table*}[ht]
\centering
\footnotesize
\renewcommand{\arraystretch}{0.9}
\caption{Results On Ascend NPU.}
\label{tab:ascend}
\vspace{-0.1in}
\resizebox{\textwidth}{!}{ 
\begin{tabular}{l l l c | c c c c c c | c}
\toprule
Model & Type & Method  & ppl$\downarrow$  & ARC-C$\uparrow$  & ARC-E$\uparrow$  & HellaSwag$\uparrow$  & OBQA$\uparrow$  & PIQA$\uparrow$  & SIQA$\uparrow$  & Avg$\uparrow$  \\
\midrule

\multirow{7}{*}{openPangu-Embedded-7B-V1.1}
&-- & dense & 31.36 & 0.3302 & 0.5673 & 0.3946 & 0.4497 & 0.1980 & 0.6844 & 0.4374 \\
&2:4 & ria & 208.04 & 0.2526 & 0.5034 & 0.3792 & 0.3645 & 0.1600 & 0.6344 & 0.3824 \\
&2:4 & wanda & 237.32 & 0.2628 & 0.5109 & 0.3746 & 0.3593 & 0.1600 & 0.6328 & 0.3834 \\
&2:4 & \methodname\  & \textbf{106.21} & \textbf{0.2927} & \textbf{0.6002} & \textbf{0.3930} & \textbf{0.3778} & \textbf{0.1840} & \textbf{0.6518} & \textbf{0.4166} \\
&50\% & ria & 59.75 & 0.3072 & 0.5488 & 0.3936 & 0.4211 & 0.1960 & 0.6632 & 0.4217 \\
&50\% & wanda & 70.26 & 0.2944 & 0.5492 & 0.3930 & 0.4227 & \textbf{0.2060} & 0.6670 & 0.4221 \\
&50\% & \methodname\ & \textbf{49.73} & \textbf{0.3677} & \textbf{0.7054} & \textbf{0.4043} & \textbf{0.6700} & 0.2040 & \textbf{0.6959} & \textbf{0.5079} \\
\bottomrule
\end{tabular}
}
\end{table*}

\subsection{Ablation Study}
\label{sec:ablation}
In this section, we conduct ablation studies to better understand the contribution of different components in our framework. We focus on two key aspects: (i) the choice of local saliency metric, which directly affects the pruning quality, and (ii) the role of mirror descent and the saliency variable, which are introduced to stabilize optimization and balance sparsity with task performance. These analyses provide deeper insights into the design choices underlying \methodname\ and highlight their necessity.

\noindent \textbf{Local Metric.} To evaluate the sensitivity of pruning performance to the choice of local saliency metric,  we conduct ablation studies comparing different saliency metric designs. Specifically, we experiment with magnitude-based scoring, activation-aware scoring as used in Wanda~\citep{sun2024simple}, and the combination of row/column norm-based relative importance with activation signals as in RIA~\citep{zhang2024plug}. Furthermore, we incorporate the stochastic variant of RIA (stochRIA) proposed by~\citep{yi2025symmetric}, which introduces randomness into the scoring process to mitigate biases introduced by deterministic saliency measures and to enhance exploration during pruning.

Table~\ref{tab:wmetric_wikitext_ppl} reports the results of different local saliency metrics on Qwen2.5-7B. Among the evaluated methods, stochRIA demonstrates a balanced trade-off across sparsity levels. At 50\% and 60\% sparsity, it performs comparably to RIA (8.63/11.87 vs.\ 8.28/11.87), and at 70\% sparsity, it yields lower perplexity (52.34) than other alternatives. These results suggest that incorporating stochasticity into local importance estimation can improve robustness under high compression, making stochRIA a viable option for use within our framework.

\begin{table}
\centering
\footnotesize
\setlength{\tabcolsep}{10pt}  
\renewcommand{\arraystretch}{1.1}
\caption{WikiText perplexity of Qwen2.5-7B under different local metrics at varying sparsity. \label{tab:wmetric_wikitext_ppl} }

\vspace{-0.05in}
\begin{tabular}{l c c c}
\toprule
Local Metric & 50\% & 60\% & 70\% \\
\midrule
Magnitude & 38.86 & 428.56 & inf \\
Wanda     & 8.63  & 13.12  & 183.21 \\
RIA       & \textbf{8.28}  & \textbf{11.87} & 88.35 \\
StochRIA  & 8.63  & \textbf{11.87}  & \textbf{52.34} \\
\bottomrule
\end{tabular}
\vspace{+0.1in}
\end{table}

\noindent \textbf{The Necessity of Mirror Descent.}  
As discussed in Section~\ref{sec:discuss}, we argue that mirror descent and the introduction of the saliency variable are necessary components of our framework. To verify this claim, we \textbf{additionally conduct pruning experiments using only the local metric and global feedback regularizers, without mirror descent or the saliency variable}. In other words, we directly train with the following objective function:
\begin{equation}
\label{eq:ablation_mirror}
\bar{\mathcal{L}}_\rho(W)
=\mathcal{L}_{\mathrm{task}}(W)
+{\tfrac{\rho}{2}\,\|\, S(W)\,\|_F^2}
+{\Omega(W)}.
\end{equation}

\vspace{0.8em}

\begin{table}
\vspace{-0.05in}
\centering
\footnotesize
\renewcommand{\arraystretch}{1.1}
\caption{WikiText perplexity of Qwen2.5-7B in different $\Omega$ and $\rho$ at varying sparsity.}
\vspace{-0.05in}
\begin{tabular}{c c|c c c c}
\toprule
Sparsity & UniPruning & $\lambda=0.01, \rho=10^{-5}$ & $\lambda=0.01, \rho=0$ & $\lambda=0, \rho=10^{-5}$ & $\lambda=0, \rho=0$ \\
\midrule
50\% & 8.63 & 11.39 & 20.45 & 29.86 & 35.59 \\
60\% & 11.85 & 15.38 & 161.09 & inf & inf \\
\bottomrule
\end{tabular}
\label{tab:mirror_ablation}
\vspace{-0.1in}
\end{table}

Since the $L_{1}$ regularizer is non-differentiable and cannot be directly integrated into the formulation without mirror descent, we replace $\Omega$ with a $\lambda L_{2}$ regularizer for this analysis. Table~\ref{tab:mirror_ablation} reports WikiText perplexity results on Qwen2.5-7B under different $\Omega$ and $\rho$ configurations at sparsity levels of 50\% and 60\%.

As shown in Table~\ref{tab:mirror_ablation}, similarly, using stochRIA as the local saliency metric but removing the mirror descent update makes the optimization unstable and substantially increases perplexity, especially at higher sparsity levels. In the absence of local metric regularizer coefficient ($\rho=0$) or $L_2$ coefficient ($\lambda=0$), the model either diverges (infinite PPL) or converges to poor local minima. In contrast, our proposed UniPruning integrates local metric with a mirror descent driven global budget controller and achieves the lowest perplexity across all tested sparsity levels. These results demonstrate that both local saliency metrics and global coordination are essential. Crucially, mirror descent serves as the optimization bridge that enables their seamless integration in UniPruning, leading to stable and effective sparse pruning.

\section{Conclusion}
We presented \methodname, a mirror–descent framework that unifies local evidence with global coordination to prune large language models. By introducing a saliency variable anchored to activation statistics and enforcing a model-wise sparsity budget, our method naturally supports both unstructured and semi-structured $N:M$ patterns, avoids direct weight updates, and enables one-shot mask extraction at multiple sparsity levels. Experiments show that Unified Pruning consistently delivers strong performance compared to prior baselines, while ablations highlight the necessity of mirror descent and local saliency metric. Overall, the framework offers a principled, efficient, and scalable approach to LLM compression.

\bibliography{iclr2026_conference}
\bibliographystyle{iclr2026_conference}

\appendix
\newpage
\section{appendix}
\subsection{Use of Large Language Models}
In preparing this paper, we leveraged GPT-4o to assist with polishing the writing, including grammar correction, stylistic consistency, and clarity of expression. All substantive elements—problem definition, methodological design, experimental execution, and analysis—were conceived and carried out solely by the authors. Every section of the manuscript was carefully reviewed and revised by the authors to guarantee fidelity to our original contributions. The authors take complete responsibility for the accuracy and integrity of the final text.

\subsection{Additional 2:4 pruning Results}
\label{additional24result}

As shown in Table~\ref{tab:1}, \textbf{Unified Pruning} consistently outperforms other pruning baselines across all evaluated models on downstream tasks. While naive magnitude pruning leads to severe degradation and Wanda or RIA offer only moderate improvements, Unified Pruning achieves the highest average accuracy among sparse methods and remains close to the dense baseline. For instance, on Qwen2.5-14B, Unified Pruning yields an average score of 0.5459, surpassing ProxSparse (0.5366). Similar trends hold for LLaMA-2 and DeepSeek variants, highlighting the robustness of our method across both medium- and large-scale LLMs.  

Table~\ref{tab:ppl_results} further evaluates language modeling perplexity under 2:4 semi-structured pruning. We observe that Unified Pruning outperforms SparseGPT on several benchmarks, such as Qwen2.5-7B (10.86 vs.~11.42) and DeepSeek-R1-Distill-LLaMA-8B (12.45 vs.~13.07), showing clear improvements. On other models the performance is slightly worse, e.g., LLaMA-3.2-3B (21.20 vs.~20.19), but the gap remains small and does not lead to collapse. Overall, Unified Pruning achieves competitive perplexity and can be comparable to, or even surpass, methods with weight update.

Taken together, these results demonstrate that coupling local saliency metrics with a unified global budget not only improves task-wise robustness under sparsity but also mitigates the perplexity blow-up commonly seen in post-training pruning methods. Unified Pruning thus offers a reliable path toward structured sparsification of LLMs while maintaining downstream task performance.

\begin{table*}[ht]
\centering
\footnotesize
\renewcommand{\arraystretch}{0.9}
\caption{Zero-shot downstream benchmark results under 2:4 pruning.}
\label{tab:1}
\vspace{-0.1in}
\resizebox{\textwidth}{!}{ 
\begin{tabular}{l l c c c c c c | c}
\toprule
Model & Method  & ARC-C & ARC-E & HellaSwag & OBQA & PIQA & SIQA & Avg \\
\midrule

\multirow{5}{*}{LLaMA2-13B}
& Dense      & 0.4846 & 0.7942 & 0.6003 & 0.3500 & 0.7900 & 0.4729 & 0.5820 \\
& Magnitude  & 0.3174 & 0.6229 & 0.5011 & 0.2320 & 0.7171 & 0.4079 & 0.4664 \\
& Wanda      & 0.3396 & 0.6856 & 0.4629 & 0.2460 & 0.7372 & 0.4243 & 0.4826 \\
& RIA        & 0.3507 & 0.6987 & 0.4790 & 0.2600 & 0.7394 & 0.4294 & 0.4929 \\
& ProxSparse  & 0.3695 & 0.6944 & \textbf{0.5300} & \textbf{0.2920} & 0.7427 & 0.4299 & 0.5098 \\
& \methodname\   & \textbf{0.3712} & \textbf{0.7088} & 0.5255 & 0.2700 & \textbf{0.7535} & \textbf{0.4406} & \textbf{0.5116} \\
\midrule

\multirow{5}{*}{Qwen2.5-7B}
& Dense      & 0.4829 & 0.8047 & 0.6002 & 0.3360 & 0.7867 & 0.5481 & 0.5931 \\
& Magnitude  & 0.2432 & 0.3830 & 0.3006 & 0.2160 & 0.5560 & 0.3664 & 0.3442 \\
& Wanda      & 0.3652 & 0.7142 & 0.4454 & 0.2600 & 0.7198 & \textbf{0.4703} & 0.4958 \\
& RIA       & 0.3746 & 0.7176 & 0.4508 & 0.2680 & 0.7274 & 0.4678 & 0.5010 \\
& ProxSparse  & \textbf{0.3985} & 0.7168 & \textbf{0.4803} & \textbf{0.2720} & 0.7296 & 0.4437 & 0.5068 \\
& \methodname\  & 0.3959 & \textbf{0.7306} & 0.4736 & 0.2620 & \textbf{0.7345} & \textbf{0.4703} & \textbf{0.5112} \\
\midrule

\multirow{5}{*}{Qwen2.5-14B}
& Dense     & 0.5597 & 0.8241 & 0.6336 & 0.3480 & 0.8118 & 0.5537 & 0.6218 \\
& Magnitude & 0.3584 & 0.6402 & 0.4176 & 0.2560 & 0.6790 & 0.4035 & 0.4591 \\
& Wanda      & 0.3780 & 0.7231 & 0.4902 & 0.2840 & 0.7399 & 0.4386 & 0.5090 \\ 
& RIA        & 0.3959 & 0.7386 & 0.4923 & 0.2720 & 0.7421 & 0.4545 & 0.5159 \\
& ProxSparse  & 0.4428 & 0.7584 & 0.5208 & \textbf{0.2960} & 0.7470 & 0.4545 & 0.5366 \\
& \methodname\   & \textbf{0.4531} & \textbf{0.7710} & \textbf{0.5268} & 0.2860 & \textbf{0.7617} & \textbf{0.4765} & \textbf{0.5459} \\
\midrule

\multirow{5}{*}{Llama-3.2-3B}
& Dense      & 0.4224 & 0.7437 & 0.5532 & 0.3100 & 0.7666 & 0.4719 & 0.5446 \\
& Magnitude  & 0.1954 & 0.3729 & 0.2837 & 0.1440 & 0.6055 & 0.3434 & 0.3242  \\
& Wanda      & 0.2517 & 0.5093 & 0.3377 & 0.1640 & 0.6480 & 0.4675 & 0.3964  \\
& RIA        & \textbf{0.2688} & 0.5311 & 0.3433 & 0.1760 & 0.6649 & 0.3823 & 0.3944  \\
& ProxSparse  & 0.2611 & 0.5425 & \textbf{0.3877} & 0.1760 & \textbf{0.6768} & \textbf{0.3956} & 0.4066 \\
& \methodname\   & 0.2602 & \textbf{0.5572} & 0.3797 & \textbf{0.1900} & 0.6746 & 0.3884 & \textbf{0.4084}\\
\midrule

\multirow{5}{*}{\shortstack{DeepSeek-R1-\\Distill-Llama8B}}
& Dense      & 0.4087 & 0.7037 & 0.5554 & 0.3140 & 0.7606 & 0.4473 & 0.5316 \\
& Magnitude  & 0.2312 & 0.4242 & 0.3302 & 0.1340 & 0.6050 & 0.3495 & 0.3457 \\
& Wanda      & 0.2756 & 0.5459 & 0.3852 & \textbf{0.1940} & 0.6534 & 0.4012 & 0.4092 \\
& RIA        & 0.2816 & 0.5400 & 0.3846 & 0.1800 & 0.6659 & 0.3941 & 0.4077 \\
& ProxSparse  & \textbf{0.2901} & 0.5366 & 0.4066 & 0.1760 & 0.6556 & 0.4002 & 0.4109 \\
& \methodname\   & 0.2790 & \textbf{0.5614} & \textbf{0.4130} & 0.1880 & \textbf{0.6687} & \textbf{0.4069} & \textbf{0.4195} \\

\midrule

\multirow{5}{*}{\shortstack{DeepSeek-R1-\\Distill-Qwen-7B}}
& Dense     & 0.4232 & 0.6911 & 0.4637 & 0.26 & 0.7046 & 0.4248 & 0.4946 \\
& Magnitude  & 0.2526 & 0.4848 & 0.3061 & 0.1640 & 0.5996 & 0.3562 & 0.3606 \\
& Wanda     & 0.3046 & 0.5737 & 0.3660 & 0.1640 & 0.6491 & 0.3808 & 0.4064 \\
& RIA       & 0.3166 & 0.5875 & 0.3686 & 0.1620 & 0.6583 & 0.3756 & 0.4114 \\
& ProxSparse  & \textbf{0.3447} & 0.6330 & 0.3952 & \textbf{0.2120} & 0.6708 & \textbf0.4023 & 0.4430 \\
& \methodname\   & 0.3396 & \textbf{0.6545} & \textbf{0.4056} & 0.1940 & \textbf{0.6823} & \textbf{0.4023} & \textbf{0.4464} \\

\bottomrule
\end{tabular}
}
\end{table*}

\begin{table}[ht]
\centering
\normalsize
\renewcommand{\arraystretch}{1.5}
\caption{Wikitext perplexity results across models and pruning methods under 2:4 pruning.}
\label{tab:ppl_results}
\vspace{-0.1in}
\resizebox{\textwidth}{!}{ 
\begin{tabular}{lccccccc}
\toprule
Method & Weight Update & LLaMA2-13B & Qwen2.5-7B & Qwen2.5-14B & LLaMA-3.2-3B & \makecell{DeepSeek-R1-\\Distill-Llama8B} & \makecell{DeepSeek-R1-\\Distill-Qwen-7B} \\
\midrule
Dense    & - & 4.57 & 6.39 & 4.93 & 7.29 & 11.86 &21.73 \\
SparseGPT & \ding{51} & 8.30 & \textbf{8.42} & 9.57 & \textbf{20.19} & 25.45 & 35.69 \\
\methodname\ & \ding{55} & \textbf{6.87} & 10.86 & \textbf{9.10} & 21.20 & \textbf{20.91}  & \textbf{30.24}   \\
\bottomrule
\end{tabular}
}
\end{table}

\begin{figure}[ht]
\caption{Wikitext perplexity comparison at 70\% sparsity}
    \centering
    \includegraphics[width=0.65\textwidth]{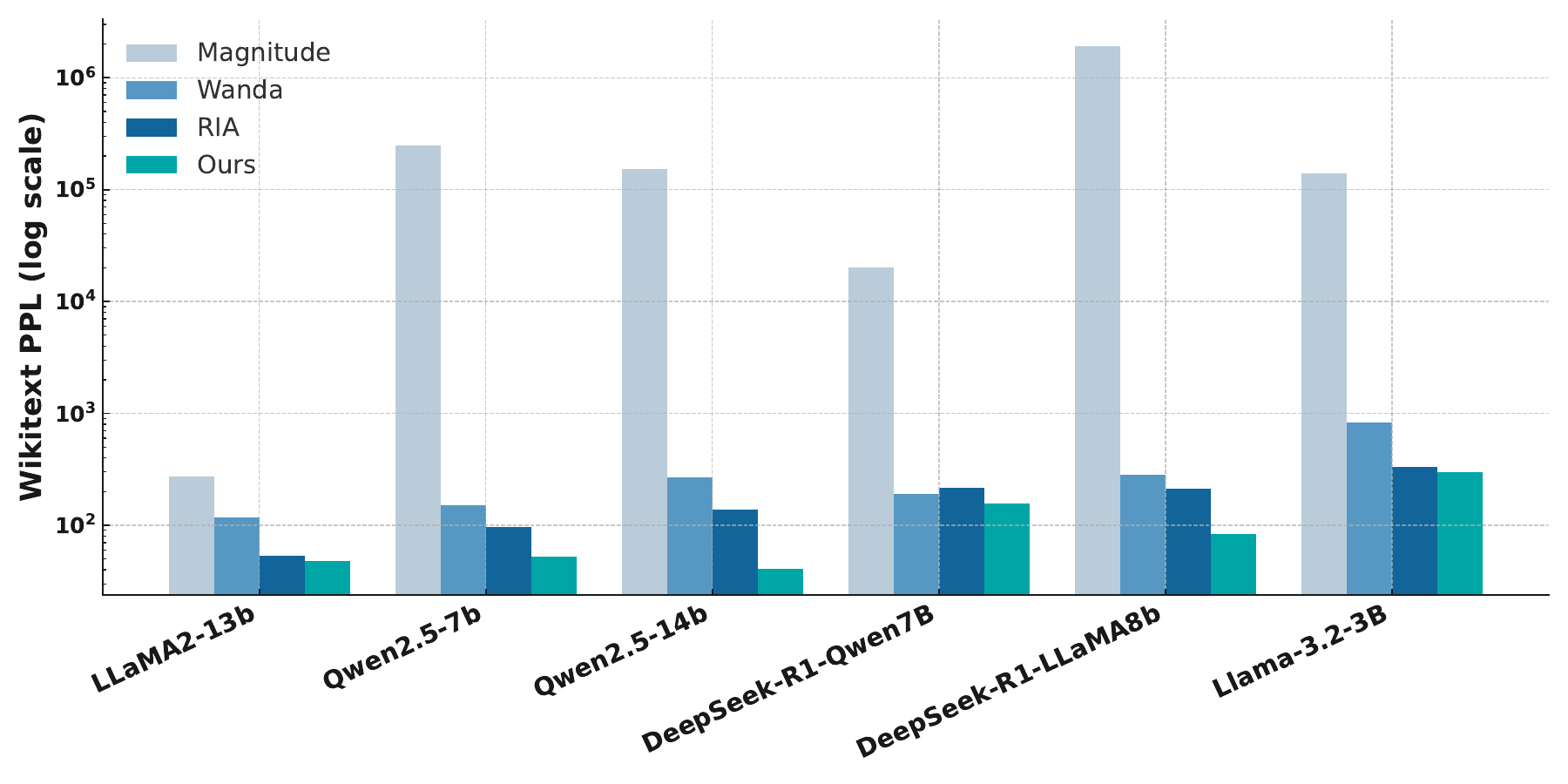}
    \label{fig:comparison_0.7}
\end{figure}
\subsection{Additional Analysis}
 
\subsubsection{\noindent \textbf{The Robust to Higher Sparsity.}}
We further evaluate pruning performance at a more aggressive \textbf{70\% sparsity}, with results shown in Fig.~\ref{fig:comparison_0.7}. The gap between methods becomes more evident in this regime. Magnitude and Wanda both collapse under such high compression, leading to perplexities that grow by several orders of magnitude. RIA is more stable but still suffers from noticeable degradation. In contrast, \methodname\ remains well-behaved across all tested architectures, consistently yielding perplexities within a reasonable range. 

\subsubsection{\noindent \textbf{Inference Effiency}}
We evaluated the throughput gain of applying 2{:}4 semi-structured sparsity to Qwen2.5-7B on an NVIDIA H200 GPU (batch size $8$, sequence length $128$). The sparse kernels accelerate the main compute-intensive modules: the self-attention projections (Q/K/V/O) achieve a \textbf{1.30$\times$} speedup, while the MLP blocks (up, down, and gating projections) reach \textbf{1.34$\times$}. When integrating all components---including non-sparse operations such as softmax, normalization, and key--value I/O, the overall end-to-end inference achieves a \textbf{1.27$\times$} throughput improvement. These results fall within the typical \textbf{1.2--1.4$\times$} range reported for 2{:}4 sparsity on modern accelerators.

\begin{table}[ht]
\caption{Inference time analysis for Qwen2.5-7B.}
\vspace{-0.1in}
\centering
\small
\begin{tabular}{lc}
\toprule
\textbf{Module name} & \textbf{Speedup ratio} \\
\midrule
self\_attn Q/K/V/O   & 1.30$\times$ \\
MLP up/down/gate     & 1.34$\times$ \\
End-to-end inference & \textbf{1.27$\times$} \\
\bottomrule
\end{tabular}
\label{tab:h200_qwen25_7b_24}
\vspace{-6pt}
\end{table}

\subsubsection{\noindent \textbf{Hyperparameter Analysis}}
\label{sec:hyper}

\begin{figure}[htbp]
    \caption{WikiText perplexity of different models at 60\% sparsity across $\lambda$ values.}
    \centering
    \includegraphics[width=0.7\textwidth]{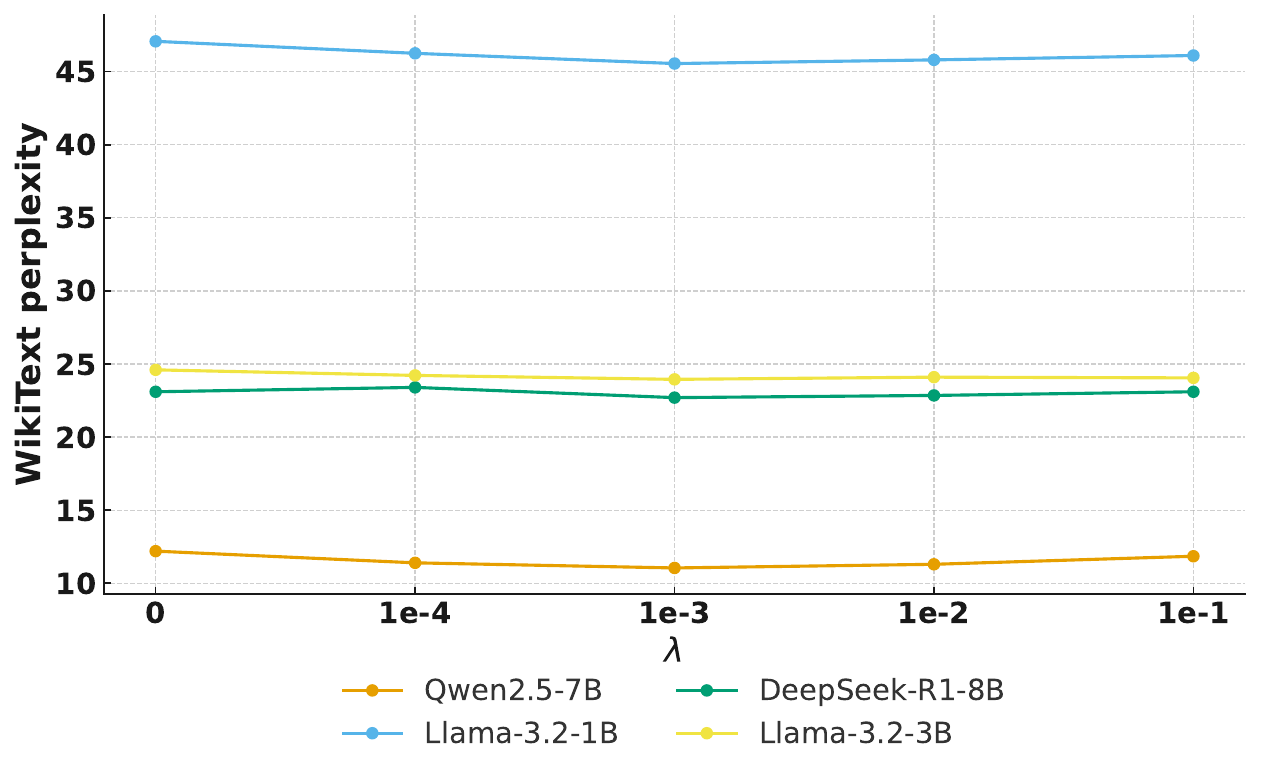}
    \label{fig:lambda_sweep_60}
\end{figure}

We further investigate the impact of key hyperparameters on the performance of our method. In particular, we examine the effects of the regularization weight $\lambda$. Results shown in Fig.~\ref{fig:lambda_sweep_60}. These experiments provide insights into the sensitivity of \methodname\ to hyperparameter choices and demonstrate the robustness of our framework under different configurations.

\subsubsection{\noindent \textbf{Limitations of Our Method}}
While \methodname\ achieves strong performance across various model families and sparsity regimes, several limitations remain that warrant further investigation:

\begin{itemize}
    \item \textbf{Additional hyperparameters.} Our framework introduces additional hyperparameters, including the regularization coefficient $\lambda$ and the choice of local saliency metric. As demonstrated in our hyperparameter analysis, model performance can vary across different configurations.
    \item \textbf{Limited architectural generalization.} Our experiments primarily focus on LLaMA and Qwen model families, with limited exploration of other transformer architectures. It remains an open question how well the proposed method generalizes to models with substantially different design paradigms.
\end{itemize}
These limitations highlight promising directions for improving the robustness and applicability of Unified Pruning in broader model and deployment contexts.

\subsection{Proof of theorem~\ref{thm:global}}

First of all, we reformulate Eq.~\ref{eq:energy} into
an equivalent form. Without loss of generality, consider $\Omega=L_1$ in the sequel. 
Denote $R(P):=\Omega(\Gamma)$, then our \methodname\ Algorithm is equivalent
to the following iterations, 
\begin{subequations} 
\begin{align}
 & W_{k+1}=W_{k}-\kappa\alpha\nabla_{W}\bar{\mathcal{L}}(W_{k},\Gamma_{k}),\label{Eq:SLBI-reform-iter1}\\
 & \Gamma_{k+1}=\mathrm{Prox}_{\kappa\Omega}(\Gamma_{k}+\kappa(g_{k}-\alpha\nabla_{\Gamma}\bar{\mathcal{L}}(W_{k},\Gamma_{k}))),\label{Eq:SLBI-reform-iter2}\\
 & g_{k+1}=g_{k}-\kappa^{-1}(\Gamma_{k+1}-\Gamma_{k}+\kappa\alpha\cdot\nabla_{\Gamma}\bar{\mathcal{L}}(W_{k},\Gamma_{k})).\label{Eq:SLBI-reform-iter3}
\end{align}
\end{subequations} where $p_{k}=[0,g_{k}]^{T}\in\partial R(P_{k})$
and $g_{k}\in\partial\Omega(\Gamma_{k})$. Thus 

 The global convergence of $(M_{k},\Gamma_{k},g_{k})$  can be established based on the Kurdyka-{\L }ojasiewicz framework.

\subsubsection{Sufficient Descent Property along Lyapunov Function}

\label{sc:proof}

Let $P_{k}:=(M_{k},\Gamma_{k})$, and $Q_{k}:=(P_{k},g_{k-1}),k\in\mathbb{N}$.
In the following, we present the sufficient descent property of $Q_{k}$
along the Lyapunov function $F$.

\noindent \textbf{Lemma.} \label{Lemma:sufficient-descent} Suppose
that $\mathcal{L}$ is continuously differentiable and $\nabla\mathcal{L}$ is Lipschitz
continuous with a constant $Lip>0$,$C=\max|W_0|$is the max value of the pretrained model parameters $W_0$. Let $\{Q_{k}\}$ be a sequence
generated by SLBI with a finite initialization. If $0<\alpha<\frac{2}{\kappa(Lip*C+\nu^{-1})}$,
then
\[
F(Q_{k+1})\leq F(Q_{k})-\rho\|Q_{k+1}-Q_{k}\|_{2}^{2},
\]
where $\rho:=\frac{1}{\kappa}-\frac{\alpha(Lip*C+\nu^{-1})}{2}$.

\begin{proof} By the optimality condition of \eqref{eq:update-gamma}
and also the inclusion $p_{k}=[0,g_{k}]^{T}\in\partial R(P_{k})$,
there holds
\begin{align*}
\kappa(\alpha\nabla\bar{\cal L}(P_{k})+p_{k+1}-p_{k})+P_{k+1}-P_{k}=0,
\end{align*}
which implies
\begin{align}
    \nabla \hat{\mathcal{L}}(M)=\sum \nabla \mathcal{L}(\hat{W}) * W_0
\end{align}
Noting that $\bar{\cal L}(P)=\hat{\mathcal{L}}(M)+\frac{1}{2\nu}\|M-\Gamma\|_{2}^{2}= \mathcal{L}(W_{0}\odot M)+\frac{1}{2\nu}\|M-\Gamma\|_{2}^{2}$. 
Together with, 

\begin{align}
\alpha\bar{\cal L}(P_{k+1})+D(\Gamma_{k+1},\Gamma_{k})+\rho\|P_{k+1}-P_{k}\|_{2}^{2}\leq\alpha\bar{\cal L}(P_{k}).\label{Eq:sufficientdescent-barL}
\end{align}
Adding some terms in both sides of the above inequality and after
some reformulations implies
\begin{align}
F(Q_{k+1}) & \leq F(Q_{k})-\rho\|P_{k+1}-P_{k}\|_{2}^{2}-B_{\Omega}^{g_{k+1}}(\Gamma_{k},\Gamma_{k-1})-B_{\Omega}^{g_{k-1}}(\Gamma_{k},\Gamma_{k-1})\label{Eq:sufficientdescent-Breg}\\
 & \leq F(Q_{k})-\rho\|P_{k+1}-P_{k}\|_{2}^{2},\label{Eq:sufficientdescent}
\end{align}
where the final equality holds for $D(\Gamma_{k+1},\Gamma_{k})-B_{\Omega}^{g_{k}}(\Gamma_{k+1},\Gamma_{k})=B_{\Omega}^{g_{k+1}}(\Gamma_{k},\Gamma_{k-1}).$

Note that the final inequality holds for $B_{\Omega}^{g_{k+1}}(\Gamma_{k},\Gamma_{k-1})\geq0$
and $B_{\Omega}^{g_{k-1}}(\Gamma_{k},\Gamma_{k-1})\geq0.$ Thus, we
finish the proof of this lemma. \end{proof}

Based on Lemma \ref{Lemma:sufficient-descent}, we directly obtain
the following lemma.

\begin{lemma} \label{Lemma:convergence-funcvalue} Suppose that assumptions
of Lemma \ref{Lemma:sufficient-descent} hold. Then
\begin{enumerate}
\item[(i)] both $\alpha\{\bar{\cal L}(P_{k})\}$ and $\{F(Q_{k})\}$ converge
to the same finite value, and $\lim_{k\rightarrow\infty}B_{\Omega}^{g_{k}}(\Gamma_{k+1},\Gamma_{k})=0.$
\item[(ii)] the sequence $\{(M_{k},\Gamma_{k},g_{k})\}$ is bounded,
\item[(iii)] $\lim_{k\rightarrow\infty}\|P_{k+1}-P_{k}\|_{2}^{2}=0$ and $\lim_{k\rightarrow\infty}D(\Gamma_{k+1},\Gamma_{k})=0,$
\item[(iv)] $\frac{1}{K}\sum_{k=0}^{K}\|P_{k+1}-P_{k}\|_{2}^{2}\rightarrow0$
at a rate of ${\cal O}(1/K)$.
\end{enumerate}
\end{lemma}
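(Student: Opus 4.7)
The plan is to harvest all four claims from the one-line descent inequality of Lemma~\ref{Lemma:sufficient-descent}, combined with the standing hypothesis that $\mathcal{L}$ (and hence $\bar{\cal L}$) is bounded below. The overall template is: monotone $+$ bounded below $\Rightarrow$ convergence of function values; telescoping $\Rightarrow$ summability of nonnegative increments $\Rightarrow$ zero-limits of those increments; division by $K$ $\Rightarrow$ the ergodic rate. No Kurdyka--\L ojasiewicz machinery is required.

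For (i), I would note that $\{F(Q_k)\}$ is nonincreasing by Lemma~\ref{Lemma:sufficient-descent} and bounded below (since $\bar{\cal L}$ is bounded below and the Bregman residuals that the Lyapunov function $F$ appends to $\alpha\bar{\cal L}$ are nonnegative), so it converges to some $F^\ast$. The intermediate inequality \eqref{Eq:sufficientdescent-barL} that was already produced inside the proof of Lemma~\ref{Lemma:sufficient-descent} shows the same for $\{\alpha\bar{\cal L}(P_k)\}$, giving a limit $L^\ast$. To match $F^\ast = L^\ast$, I would telescope \eqref{Eq:sufficientdescent-Breg}: since $F(Q_0) - \underline{F}$ is a finite upper bound on $\sum_k\bigl(F(Q_k) - F(Q_{k+1})\bigr)$, every nonnegative term on the right-hand side of \eqref{Eq:sufficientdescent-Breg} is summable, so the Bregman residuals $B_\Omega^{g_{k+1}}(\Gamma_k,\Gamma_{k-1})$ and $B_\Omega^{g_{k-1}}(\Gamma_k,\Gamma_{k-1})$ vanish along the iterates. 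These residuals are exactly what separates $F$ from $\alpha\bar{\cal L}$, forcing the two limits to coincide and giving $B_\Omega^{g_k}(\Gamma_{k+1},\Gamma_k)\to 0$.

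For (ii), I would confine the iterates to the sublevel set $\{Q:F(Q)\le F(Q_0)\}$. Coercivity of $F$ in $(M,\Gamma)$ is inherited from the quadratic coupling $\tfrac{1}{2\nu}\|M-\Gamma\|_2^2$ inside $\bar{\cal L}$ (controlling $M-\Gamma$) together with the mild growth/coercivity of $\Omega$ in $\Gamma$, yielding boundedness of $\{(M_k,\Gamma_k)\}$. Boundedness of $\{g_k\}$ follows from $g_k\in\partial\Omega(\Gamma_k)$ on the bounded sequence $\{\Gamma_k\}$; for the concrete choice $\Omega=\ell_1$ exploited elsewhere in the proof, one has the sharper bound $\|g_k\|_\infty\le 1$ automatically. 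For (iii)-(iv), I would telescope the sufficient-descent inequality itself: $\rho\sum_{k=0}^{K}\|P_{k+1}-P_k\|_2^2 \le F(Q_0)-F(Q_{K+1}) \le F(Q_0)-\underline{F} < \infty$, which immediately gives $\|P_{k+1}-P_k\|_2\to 0$, and dividing both sides by $K$ gives $\tfrac{1}{K}\sum_{k=0}^{K}\|P_{k+1}-P_k\|_2^2 = \mathcal{O}(1/K)$. The companion claim $D(\Gamma_{k+1},\Gamma_k)\to 0$ is obtained by the same telescoping argument applied to \eqref{Eq:sufficientdescent-barL}, in which $D(\Gamma_{k+1},\Gamma_k)$ appears as an extra nonnegative term.

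The main obstacle I anticipate is the equality of limits in (i): $F$ is not literally $\alpha\bar{\cal L}$, so one must carefully unpack the Lyapunov construction implicit in the proof of Lemma~\ref{Lemma:sufficient-descent} and verify that the telescoped Bregman residuals are precisely the gap between them. A secondary, more technical point is justifying coercivity in (ii) when $\Omega$ is only assumed proper, convex, and lower semi-continuous rather than coercive; I expect this to be handled either by a mild additional growth assumption on $\Omega$ or, as is natural here, by specializing to $\Omega=\ell_1$ and exploiting the explicit subdifferential bound.
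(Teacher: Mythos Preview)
Your plan for (i), (iii), and (iv) matches the paper's proof essentially line for line: monotonicity from \eqref{Eq:sufficientdescent-barL} and \eqref{Eq:sufficientdescent}, lower-boundedness from the hypothesis on $\mathcal{L}$, equality of the two limits because $F(Q_k)-\alpha\bar{\cal L}(P_k)=B_\Omega^{g_{k-1}}(\Gamma_k,\Gamma_{k-1})$ is squeezed to zero, and telescoping (the paper telescopes \eqref{Eq:sufficientdescent-barL}, which already carries both $\rho\|P_{k+1}-P_k\|^2$ and $D(\Gamma_{k+1},\Gamma_k)$, against $\alpha\bar{\cal L}(P_0)$) to obtain the zero limits and the $\mathcal O(1/K)$ rate.

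The one substantive deviation is (ii). You propose to bound $\Gamma_k$ first via coercivity of $\Omega$ and then propagate to $M_k$ through the quadratic coupling. This does not go through in the proof section's formulation: $\Omega$ enters the Lyapunov function $F$ only through the Bregman residual $B_\Omega^{g_{k-1}}(\Gamma_k,\Gamma_{k-1})$, which is merely nonnegative and carries no coercivity in $\Gamma_k$---even for $\Omega=\ell_1$, a large $\Gamma_k$ with $\Gamma_{k-1}$ nearby and $g_{k-1}$ a compatible subgradient keeps that residual small. The paper instead tacitly invokes an additional hypothesis, namely that the task loss $\mathcal{L}$ has bounded level sets: this bounds $M_k$ directly (since $\mathcal{L}(M_k)\le\bar{\cal L}(P_k)\le\bar{\cal L}(P_0)$ by monotonicity), after which $\Gamma_k$ is bounded via the finiteness of $\tfrac{1}{2\nu}\|M_k-\Gamma_k\|^2$, and finally $g_k\in\partial\Omega(\Gamma_k)$ is bounded on the bounded set $\{\Gamma_k\}$. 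Your suspicion that (ii) needs an extra ingredient is correct; the paper's ingredient is coercivity of $\mathcal{L}$, not of $\Omega$.
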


\begin{proof} By (\ref{Eq:sufficientdescent-barL}), $\bar{\cal L}(P_{k})$
is monotonically decreasing due to $D(\Gamma_{k+1},\Gamma_{k})\geq0$.
Similarly, by (\ref{Eq:sufficientdescent}), $F(Q^{k})$ is also monotonically
decreasing. By the lower boundedness assumption of $\mathcal{L}(W)$, both
$\bar{\cal L}(P)$ and $F(Q)$ are lower bounded by their definitions respectively.
Therefore, both $\{\bar{\cal L}(P_{k})\}$ and $\{F(Q_{k})\}$ converge,
and it is obvious that $\lim_{k\rightarrow\infty}F(Q_{k})\geq\lim_{k\rightarrow\infty}\alpha\bar{\cal L}(P_{k})$.
By (\ref{Eq:sufficientdescent-Breg}),
\begin{align*}
B_{\Omega}^{g_{k-1}}(\Gamma_{k},\Gamma_{k-1})\leq F(Q_{k})-F(Q_{k+1}),\ k=1,\ldots.
\end{align*}

By the definition of $F(Q_{k})=\alpha\bar{\cal L}(P_{k})+B_{\Omega}^{g_{k-1}}(\Gamma_{k},\Gamma_{k-1})$
and the above equality, it yields
\[
\lim_{k\rightarrow\infty}F(Q_{k})=\lim_{k\rightarrow\infty}\alpha\bar{\cal L}(P_{k}).
\]

Since $L(M)$ has bounded level sets, then $M_{k}$ is bounded.
By the definition of $\bar{\cal L}(M,\Gamma)$ and the finiteness of
$\bar{\cal L}(M_{k},\Gamma_{k})$, $\Gamma_{k}$ is also bounded due
to $M_{k}$ is bounded. The boundedness of $g_{k}$ is due to $g_{k}\in\partial\Omega(\Gamma_{k})$,
condition (d), and the boundedness of $\Gamma_{k}$.

By (\ref{Eq:sufficientdescent}), summing up (\ref{Eq:sufficientdescent})
over $k=0,1,\ldots,K$ yields
\begin{align}
\sum_{k=0}^{K}\left(\rho\|P_{k+1}-P_{k}\|^{2}+D(\Gamma_{k+1},\Gamma_{k})\right)<\alpha\bar{\cal L}(P_{0})<\infty.\label{Eq:summable}
\end{align}
Letting $K\rightarrow\infty$ and noting that both $\|P_{k+1}-P_{k}\|^{2}$
and $D(\Gamma_{k+1},\Gamma_{k})$ are nonnegative, thus
\[
\lim_{k\rightarrow\infty}\|P_{k+1}-P_{k}\|^{2}=0,\quad\lim_{k\rightarrow\infty}D(\Gamma_{k+1},\Gamma_{k})=0.
\]
Again by (\ref{Eq:summable}),
\begin{align*}
\frac{1}{K}\sum_{k=0}^{K}\left(\rho\|P_{k+1}-P_{k}\|^{2}+D(\Gamma_{k+1},\Gamma_{k})\right)<K^{-1}\alpha\bar{\cal L}(P_{0}),
\end{align*}
which implies $\frac{1}{K}\sum_{k=0}^{K}\|P_{k+1}-P_{k}\|^{2}\rightarrow0$
at a rate of ${\cal O}(1/K)$. \end{proof}

\subsubsection{Relative Error Property}

\label{sc:relative-error}

In this subsubsection, we provide the bound of subgradient by the discrepancy
of two successive iterates.
\begin{align}
H_{k+1}:=\left(\begin{array}{c}
\alpha\nabla_{M}\bar{\cal L}(M_{k+1},\Gamma_{k+1})\\
\alpha\nabla_{\Gamma}\bar{\cal L}(M_{k+1},\Gamma_{k+1})+g_{k+1}-g_{k}\\
\Gamma_{k}-\Gamma_{k+1}
\end{array}\right)\in\partial F(Q_{k+1}),\ k\in\mathbb{N}.\label{Eq:subgradient}
\end{align}

\noindent \textbf{Lemma. } \label{Lemma:relative-error} Under assumptions
of Lemma \ref{Lemma:convergence-funcvalue}, then
\[
\|H_{k+1}\|\leq\rho_{1}\|Q_{k+1}-Q_{k}\|,\ \mathrm{for}\ H_{k+1}\in\partial F(Q_{k+1}),\ k\in\mathbb{N},
\]
where $\rho_{1}:=2\kappa^{-1}+1+\alpha(Lip*C+2\nu^{-1})$. Moreover,
$\frac{1}{K}\sum_{k=1}^{K}\|H_{k}\|^{2}\rightarrow0$ at a rate of
${\cal O}(1/K)$.

\begin{proof} Note that
\begin{align}
\nabla_{M}\bar{\cal L}(M_{k+1},\Gamma_{k+1}) & =(\nabla_{M}\bar{\cal L}(M_{k+1},\Gamma_{k+1})-\nabla_{M}\bar{\cal L}(M_{k+1},\Gamma_{k}))\label{Eq:nabla-W-decomp}\\
 & +(\nabla_{M}\bar{\cal L}(M_{k+1},\Gamma_{k})-\nabla_{M}\bar{\cal L}(M_{k},\Gamma_{k}))+\nabla_{M}\bar{\cal L}(M_{k},\Gamma_{k}).\nonumber
\end{align}
where the last inequality holds for the Lipschitz continuity of $\nabla\mathcal{L}$
with a constant $Lip>0$,and $C=\max{|W_0|}$ .By \eqref{Eq:SLBI-reform-iter1},
\begin{align*}
\|\nabla_{M}\bar{\cal L}(M_{k},\Gamma_{k})\|=(\kappa\alpha)^{-1}\|M_{k+1}-M_{k}\|.
\end{align*}
Substituting the above (in)equalities into (\ref{Eq:nabla-W-decomp})
yields
\begin{align*}
\|\nabla_{M}\bar{\cal L}(M_{k+1},\Gamma_{k+1})\|\leq\left[(\kappa\alpha)^{-1}+Lip*C+\nu^{-1}\right]\cdot\|M_{k+1}-M_{k}\|+\nu^{-1}\|\Gamma_{k+1}-\Gamma_{k}\|
\end{align*}
Thus,
\begin{align}
\|\alpha\nabla_{M}\bar{\cal L}(M_{k+1},\Gamma_{k+1})\|\leq\left[\kappa^{-1}+\alpha(Lip*C+\nu^{-1})\right]\cdot\|M_{k+1}-M_{k}\|+\alpha\nu^{-1}\|\Gamma_{k+1}-\Gamma_{k}\|.\label{Eq:subgradbound-part1}
\end{align}

Noting that $\nabla_{\Gamma}\bar{\cal L}(M_{k},\Gamma_{k})=\nu^{-1}(\Gamma_{k}-M_{k})$,
and after some simplifications yields
\begin{align}
\label{Eq:subgradbound-part2}
\|\alpha\nabla_{\Gamma}\bar{\cal L}(M_{k+1},\Gamma_{k+1})+g_{k+1}-g_{k}\| & =\|(\kappa^{-1}-\alpha\nu^{-1})\cdot(\Gamma_{k}-\Gamma_{k+1})+\alpha\nu^{-1}(M_{k}-M_{k+1})\|\nonumber \\
 & \leq\alpha\nu^{-1}\|M_{k}-M_{k+1}\|+(\kappa^{-1}-\alpha\nu^{-1})\|\Gamma_{k}-\Gamma_{k+1}\|,
\end{align}
where the last inequality holds for the triangle inequality and $\kappa^{-1}>\alpha\nu^{-1}$
by the assumption.

By (\ref{Eq:subgradbound-part1}), (\ref{Eq:subgradbound-part2}),
and the definition of $H_{k+1}$ (\ref{Eq:subgradient}), there holds
\begin{align}
\|H_{k+1}\| & \leq\left[\kappa^{-1}+\alpha(Lip*C+2\nu^{-1})\right]\cdot\|M_{k+1}-M_{k}\|+(\kappa^{-1}+1)\|\Gamma_{k+1}-\Gamma_{k}\|\nonumber \\
 & \leq\left[2\kappa^{-1}+1+\alpha(Lip*C+2\nu^{-1})\right]\cdot\|Q_{k+1}-Q_{k}\|.\nonumber
\end{align}

By Lemma \ref{Lemma:convergence-funcvalue}(iv),
$\frac{1}{K}\sum_{k=1}^{K}\|H_{k}\|^{2}\rightarrow0$ at a rate of
${\cal O}(1/K)$.

This finishes the proof of this lemma. \end{proof}

\subsubsection{Kurdyka-{\L }ojasiewicz Property \label{subsec:Kurdyka-property}}

To introduce the definition of the Kurdyka-{\L }ojasiewicz (KL)
property, we need some notions and notations from variational analysis.

The notion of subdifferential plays a central role in the following
definitions. For each ${\bf x}\in\mathrm{dom}(h):=\{{\bf x}\in\mathbb{R}^{p}:h({\bf x})<+\infty\}$,
the \textit{Fr\'{e}chet subdifferential} of $h$ at ${\bf x}$, written
$\widehat{\partial}h({\bf x)}$, is the set of vectors ${\bf v}\in\mathbb{R}^{p}$
which satisfy
\[
\lim\inf_{{\bf y}\neq{\bf x},{\bf y}\rightarrow{\bf x}}\ \frac{h({\bf y})-h({\bf x})-\langle{\bf v},{\bf y}-{\bf x}\rangle}{\|{\bf x}-{\bf y}\|}\geq0.
\]
When ${\bf x}\notin\mathrm{dom}(h),$ we set $\widehat{\partial}h({\bf x})=\varnothing.$
The \emph{limiting-subdifferential} (or simply \emph{subdifferential})
of $h$, written $\partial h({\bf x})$
at ${\bf x}\in\mathrm{dom}(h)$, is defined by
\begin{align}
\partial h({\bf x}):=\{{\bf v}\in\mathbb{R}^{p}:\exists{\bf x}^{k}\to{\bf x},\;h({\bf x}^{k})\to h({\bf x}),\;{\bf v}^{k}\in\widehat{\partial}h({\bf x}^{k})\to{\bf v}\}.\label{Def:limiting-subdifferential}
\end{align}
A necessary (but not sufficient) condition for ${\bf x}\in\mathbb{R}^{p}$
to be a minimizer of $h$ is $\mathbf{0}\in\partial h({\bf x})$.
A point that satisfies this inclusion is called \textit{limiting-critical}
or simply \textit{critical}. The distance between a point ${\bf x}$
to a subset ${\cal S}$ of $\mathbb{R}^{p}$, written $\mathrm{dist}({\bf x},{\cal S})$,
is defined by $\mathrm{dist}({\bf x},{\cal S})=\inf\{\|{\bf x}-{\bf s}\|:{\bf s}\in{\cal S}\}$,
where $\|\cdot\|$ represents the Euclidean norm.

The \textit{graph} is defined by
\begin{align*}
 & \mathrm{Graph}(h):=\{({\bf x},y)\in\mathbb{R}^{p}\times\mathbb{R}:y=h({\bf x})\},\\
(\text{resp.}\; & \mathrm{Graph}(h):=\{({\bf x},{\bf y})\in\mathbb{R}^{p}\times\mathbb{R}^{q}:{\bf y}\in h({\bf x})\}),
\end{align*}
and its domain by $\mathrm{dom}(h):=\{{\bf x}\in\mathbb{R}^{p}:h({\bf x})<+\infty\}$
(resp. $\mathrm{dom}(h):=\{{\bf x}\in\mathbb{R}^{p}:h({\bf x})\neq\varnothing\}$).
When $h$ is a proper function, i.e., when $\mathrm{dom}(h)\neq\varnothing,$
the set of its global minimizers (possibly empty) is denoted by
\[
\arg\min h:=\{{\bf x}\in\mathbb{R}^{p}:h({\bf x})=\inf h\}.
\]

\begin{definition}{[}Kurdyka-{\L }ojasiewicz property{]} \label{def:KL-function}
A function $h$ is said to have the Kurdyka-{\L }ojasiewicz (KL)
property at $\bar{u}\in\mathrm{dom}(\partial h):=\{v\in\mathbb{R}^{n}|\partial h(v)\neq\emptyset\}$,
if there exists a constant $\eta\in(0,\infty)$, a neighborhood ${\cal N}$
of $\bar{u}$ and a function $\phi:[0,\eta)\rightarrow\mathbb{R}_{+}$,
which is a concave function that is continuous at $0$ and satisfies
$\phi(0)=0$, $\phi\in{\cal C}^{1}((0,\eta))$, i.e., $\phi$ is continuous
differentiable on $(0,\eta)$, and $\phi'(s)>0$ for all $s\in(0,\eta)$,
such that for all $u\in{\cal N}\cap\{u\in\mathbb{R}^{n}|h(\bar{u})<h(u)<h(\bar{u})+\eta\}$,
the following inequality holds
\begin{align}
\phi'(h(u)-h(\bar{u}))\cdot\mathrm{dist}(0,\partial h(u))\geq1.\label{Eq:def-KL-function}
\end{align}
If $h$ satisfies the KL property at each point of $\mathrm{dom}(\partial h)$,
$h$ is called a KL function. \end{definition}

\begin{definition}{[}Semialgebraic{]}\hfill{}\label{Def:semialgebraic}
\begin{enumerate}
\item[(a)] A function $h:\mathbb{R}^{p}\rightarrow\mathbb{R}\cup\{+\infty\}$
(resp. a point-to-set mapping $h:\mathbb{R}^{p}\rightrightarrows\mathbb{R}^{q}$)
is called \textit{semialgebraic} if its graph $\mathrm{Graph}(h)$
is a semialgebraic set.
\item[(b)] A set ${\cal D}\subset\mathbb{R}^{p}$ is called semialgebraic 
if it can be represented as
\[
{\cal D}=\bigcup_{i=1}^{s}\bigcap_{j=1}^{t}\left\lbrace {\bf x}\in\mathbb{R}^{p}:P_{ij}({\bf x})=0,Q_{ij}({\bf x})>0\right\rbrace ,
\]
where $P_{ij},Q_{ij}$ are real polynomial functions for $1\leq i\leq s,1\leq j\leq t.$

\end{enumerate}
\end{definition}

The class of semialgebraic sets are stable under the operation of finite union, finite intersection,
Cartesian product or complementation. Some typical examples include
\text{polynomial} functions, the indicator function of a semialgebraic
set, and the \text{Euclidean norm}.
\begin{definition}{[}Real analytic{]} \label{Def:real-analytic}
A function $h$ with domain an open set $U\subset\mathbb{R}$ and
range the set of either all real or complex numbers, is said to be
\textbf{real analytic} at $u$ if the function $h$ may be represented
by a convergent power series on some interval of positive radius centered
at $u$: $h(x)=\sum_{j=0}^{\infty}\alpha_{j}(x-u)^{j},$ for some
$\{\alpha_{j}\} \subset \mathbb{R} $. The function is said to be \textbf{real
analytic} on $V\subset U$ if it is real analytic at each $u\in V$. The real analytic
function $f$ over $\mathbb{R}^{p}$ for some positive integer $p>1$
can be defined similarly.

Typical real analytic functions include polynomials, exponential functions, and the logarithm,
trigonometric and power functions on any open set of their domains.
One can verify whether a multivariable real function $h({\bf x)}$
on $\mathbb{R}^{p}$ is analytic by checking the analyticity of $g(t):=h({\bf x}+t{\bf y})$
for any ${\bf x},{\bf y}\in\mathbb{R}^{p}$. \end{definition}


Let $(\bar{W},\bar{\Gamma},\bar{g})$ be a critical point of $F$,
then the following holds
\begin{align}
 & \partial_{M}F(\bar{M},\bar{\Gamma},\bar{g})=\alpha(\nabla\mathcal{L}(\bar{M})+\nu^{-1}(\bar{M}-\bar{\Gamma}))=0,\nonumber \\
 & \partial_{\Gamma}F(\bar{M},\bar{\Gamma},\bar{g})=\alpha\nu^{-1}(\bar{\Gamma}-\bar{M})+\partial\Omega(\bar{\Gamma})-\bar{g}\ni0,\label{Eq:critpoint-F}\\
 & \partial_{g}F(\bar{M},\bar{\Gamma},\bar{g})=\bar{\Gamma}-\partial\Omega^{*}(\bar{g})\ni0.\nonumber
\end{align}
By the final inclusion and the convexity of $\Omega$, it implies
$\bar{g}\in\partial\Omega(\bar{\Gamma})$. Plugging this inclusion
into the second inclusion yields $\alpha\nu^{-1}(\bar{\Gamma}-\bar{M})=0$.
Together with the first equality imples
\[
\nabla\bar{\cal L}(\bar{M},\bar{\Gamma})=0,\quad\nabla\mathcal{L}(\bar{M})=0.
\]
This finishes the proof of this theorem.

\end{document}